\newtheorem{assumption}{Assumption}
\numberwithin{equation}{section}
  \pgfplotsset{compat=newest}
  \pgfplotsset{plot coordinates/math parser=false,trim axis left}
     \newlength\figureheight
    \newlength\figurewidth
\begin{document}

\title{A Statistical Learning Approach to Modal Regression}

\author{\name Yunlong Feng \email ylfeng@albany.edu\\
        \addr Department of Mathematics and Statistics\\
        State University of New York\\
        The University at Albany\\
        Albany, New York 12222, USA
\medskip\\
\name Jun Fan \email junfan@hkbu.edu.hk\\
\addr Department of Mathematics\\
Hong Kong Baptist University\\
 Kowloon, Hong Kong, China
\medskip\\
\name Johan A.K. Suykens \email johan.suykens@esat.kuleuven.be\\
       \addr Department of Electrical Engineering\\ESAT-STADIUS, KU Leuven\\
       Kasteelpark Arenberg 10, Leuven\\
       B-3001, Belgium}

\maketitle

\begin{abstract}
\hspace{-0.18cm}This paper studies the nonparametric modal regression problem systematically from a statistical learning viewpoint.  Originally motivated by pursuing a theoretical understanding of the maximum correntropy criterion based regression (MCCR), our study reveals that MCCR with a tending-to-zero scale parameter is essentially modal regression. We show that the nonparametric modal regression problem can be approached via the classical empirical risk minimization. Some efforts are then made to develop a framework for analyzing and implementing modal regression. For instance, the modal regression function is described, the modal regression risk is defined explicitly and its \textit{Bayes} rule is characterized; for the sake of computational tractability, the surrogate modal regression risk, which is termed as the generalization risk in our study, is introduced. On the theoretical side, the excess modal regression risk, the excess generalization risk, the function estimation error, and the relations among the above three quantities are studied rigorously. It turns out that under mild conditions, function estimation consistency and convergence may be pursued in modal regression as in vanilla regression protocols such as mean regression, median regression, and quantile regression. On the practical side, the implementation issues of modal regression including the computational algorithm and the selection of the tuning parameters are discussed. Numerical validations on modal regression are also conducted to verify our findings.   

\noindent\textbf{Keywords}: Nonparametric modal regression,  empirical risk minimization, generalization bounds, kernel density estimation, statistical learning theory 
 \end{abstract}
 
\section{Introduction}
In this paper, we are interested in the nonparametric regression problem which aims at inferring the functional relation between input and output. Regression problems are concerned with the conditional distribution, which in practice can never be known in advance. Instead, normally, what one can access is only a set of observations drawn from the joint probability distribution. To state this problem mathematically, let us denote $X$ as the explanatory variable that takes values in a compact metric space $\mathcal{X}\subset\mathbb{R}^d$ and $Y$ that takes values in $\mathcal{Y}=\mathbb{R}$ as the response variable. Typically, we consider the following  data-generating model $$Y=f^\star(X)+\epsilon,$$ where $\epsilon$ is the noise variable. In nonparametric regression problems, the purpose is to infer the unknown function $f^\star$ nonparametrically while certain assumptions on the noise  variable $\epsilon$ may be imposed. As a compromise, regression estimators usually settle for learning a characterization of the conditional distribution by sifting information through observations generated above. Characterizations of the conditional distribution are versatile, where the several usual ones include the conditional mean, the conditional median, the conditional quantile, and the conditional mode. The versatility of the characterizations of the conditional distribution raises the question that which characterization we should pursue in regression problems. To answer this question, tremendous attention has been drawn in the statistics and machine learning communities. As a matter of fact, a significant part of parametric and nonparametric regression theory has been fostered  to illuminate this question.

It is generally considered that each of the above-mentioned regression protocols has its own merits in its own regimes. For instance, it has been well understood that regression towards the conditional mean can be most effective if the noise is Gaussian or sub-Gaussian. Regression towards the conditional median or conditional quantile can be more robust in the absence of light-tailed noise or symmetric conditional distributions. In practice, the choice of the most appropriate regression protocol is usually decided by the type of data encountered. In the statistics and machine learning literature, these regression protocols have been studied extensively and understood well. In this study, we focus on a regression problem that has not been well studied in the statistical learning literature, namely, modal regression.

\subsection{Modal Regression} 
Modal regression approaches the unknown truth $f^\star$ by regressing towards the \textit{conditional mode function}. For a set of observations, the mode is the value that appears most frequently. While for a continuous random variable, the mode is the value at which its density function attains its peak value. The conditional mode function is denoted pointwisely as the mode of the conditional density of the dependent variable conditioned on the independent variable.

Previously proposed in \cite{sager1982maximum,collomb1987note} and studied in, e.g., \cite{lee1989mode,lee1993quadratic}, it is shown that one of the most appealing features of modal regression lies in its robustness to outliers, heavy-tailed noise, and skewed noise. Moreover, regression towards the conditional mode in some cases can be a better option when predicting the trends of observations. This is also the case in some real-world applications, as illustrated in \cite{matzner1998nonparametric,einbeck2006modelling,yu2014fast}.  However, it seems to us that so far not enough attention has been given to the theory and applications of modal regression, especially in the statistical learning literature. As yet another regression protocol, the above-mentioned merits of modal regression suggest that it deserves far more attention than it has received, especially in the big data era today. This motivates our study on modal regression in this paper.

\subsection{Historical Notes on Modal Regression}
Modal regression is concerned with the mode. Studies on the mode estimation date back to the 1960s since the seminal work of \cite{parzen1962estimation}. It opens the door for kernel density estimation by proposing the \textit{Parzen window} method, with the help of which the estimation of the mode can typically proceed. Many subsequent studies concerning theoretical as well as practical estimation of the mode have been emerging since then, among them are \cite{chernoff1964estimation,robertson1974iterative,fukunaga1975estimation,eddy1980optimum,comaniciu2002mean}, and \cite{dasgupta2014optimal}.
 
In a regression setup, the concern of the conditional mode estimate gives birth to modal regression. As far as we are aware, the idea of regression towards the conditional mode was first proposed in \cite{sager1982maximum} in an isotonic regression setup. It was then specifically investigated in \cite{collomb1987note} when dealing with dependent observations. As a theoretical study, the main conclusion drawn there was the uniform convergence of the nonparametric mode estimator to the conditional mode function. Lately, in \cite{lee1989mode,lee1993quadratic}, some pioneering studies of modal regression were conducted. The tractability problem of mode regression was first discussed in their studies from, say, a supervised learning and risk minimization viewpoint. By considering some specific modal regression kernels, and assuming the existence of a global conditional mode function under a linear model assumption, they established the asymptotic normality of the resulting estimator. More and more attention to the theory and applications of modal regression has been attracted since the work in \cite{yao2012local,yao2014new} and \cite{kemp2012regression}. In \cite{yao2014new}, a global mode was assumed to exist and take a linear form. Under proper assumptions on the conditional density of the noise variable, the implementation issues and the asymptotic normality of the estimator, as well as its robustness were explored. Recently, \cite{chen2016nonparametric} presented an interesting study towards modal regression in which the conditional mode was sought by estimating the maximum of a joint density. By assuming a factorizable modal manifold collection, results on asymptotic error bounds as well as techniques for constructing confidence sets and prediction sets were provided. 

To further disentangle the literature on modal regression, we can roughly categorize existing studies by tracing the thread of global or local approaches that they follow. For local approaches, the conditional mode is sought via maximizing a conditional density or a joint density which is typically estimated non-parametrically, e.g., by using kernel density estimators. Studies in \cite{collomb1987note,samanta1990non,quintela1997nonparametric,ould1997note,herrmann2004rates,ferraty2005functional,gannoun2010semiparametric,yao2012local,chen2016nonparametric,sasaki2016modal,zhou2016nonparametric,yao2016nonparametric,zhou2017bandwidth} fall into this category. For global approaches, the conditional mode is usually sought by maximizing the kernel density estimator for the variable induced by the residual and assuming that the global mode is unique and belongs to a certain hypothesis space. To name a few, studies in \cite{lee1989mode,lee1993quadratic,lee1998semiparametric,yao2014new,kemp2012regression,baldauf2012use,yu2012bayesian,lv2014robust,salah2016nonlinear} follow this line. It should be noticed that most studies based upon global approaches assume the existence (and also the uniqueness) of a global conditional mode function that is of a parametric form. While for the studies based upon local approaches, usually only the uniqueness assumption of the conditional mode function is imposed. Loosely speaking, modal regression estimators of the former case are nonparametric, while (semi-) parametric in the latter case.     

Most of the above-mentioned studies are theoretical in nature. It should be noted that some application-oriented studies on modal regression have also been conducted. Among them, \cite{matzner1998nonparametric} carried out an empirical comparison among three regression schemes, namely, the conditional mean regression, the conditional median regression, and the conditional mode regression, in nonparametric forecasting problems. They empirically observed that for certain datasets, e.g., the Old Faithful eruption prediction dataset, the mode can be a better option in forecasting than the mean and the median; \cite{yu2014fast} discussed the mode-based regression problem in the big data context. Based on empirical evaluations on the Health Survey for England dataset, they argued that the mode could be an effective alternative for pattern-finding; \cite{einbeck2006modelling} dealt with the speed-flow data in traffic engineering by applying a multi-modal regression model.

\subsection{Objectives of This Study and Our Contributions}
As mentioned above, in the statistics literature, there exist some interesting studies towards modal regression from both theoretical and practical viewpoints. However, we notice that several problems related to the theoretical understanding as well as the practical implementations of modal regression remain unclear. For example:
\begin{itemize}
\item Modal regression regresses towards the conditional mode function, a direct estimation of which involves the estimation of a  conditional or joint density. In fact, many of the existing studies on modal regression follow this approach. Notice that the explanatory variable may be high-dimensional vector-valued, which may make the estimation of the conditional or the joint density infeasible. This poses an important question: how to carry out modal regression without involving the estimation of a density function in a (possibly) high-dimensional space? According to the existing studies on modal regression, assuming the existence of a global conditional mode function and imposing some prior structure assumptions on it seem to be promising in avoiding estimating such a density. However, most existing studies of this type assume that the conditional mode function possesses a certain linear or parametric form. This could be restrictive in certain circumstances.
\item With a modal regression estimator at hand, how can we evaluate its statistical performance? That is, how can we measure the approximation ability of the modal regression estimator to the conditional mode function? This concern is of great importance in nonparametric statistics as well as in machine learning as it is closely related to the prediction ability of the estimator on future observations. On the other hand, concerning the implementation issues of modal regression, how can we perform model selection in modal regression? 
\end{itemize}

To address the above two problems raised in modal regression, in this study, we propose to perform modal regression through the classical empirical risk minimization (ERM) scheme. Within the statistical learning framework, we then develop a learning theory framework for assessing the performance of the resulting modal regression estimator. Our contributions made in this study can be summarized as follows:
\begin{itemize}
\item The first main contribution of our study is that we present the first systematic statistical learning treatment on modal regression. This purpose is achieved by developing a statistical learning setup for modal regression, adapting it into the classical ERM framework, and conducting a learning theory analysis for modal regression estimators. The statistical learning approach to modal regression in this paper distinguishes our work from previous studies.
\item The second main contribution of this study lies in that we develop a statistical learning framework for modal regression. To this end, the modal regression risk is devised, the \textit{Bayes} rule of the modal regression risk is characterized, computationally tractable surrogates of the modal regression risk are introduced, and ERM schemes for modal regression are formulated.  
\item Following the ERM scheme, by assuming the existence of a global conditional mode function, the modal regression estimator in our study is pursued by maximizing a one-dimensional density estimator. This is more computationally tractable compared with the approaches adopted in most of the existing studies, in which the estimation of a possibly high-dimensional density is involved, as detailed in Section \ref{subsec::comparison}. This gives the third main contribution of this study.        
\item Another contribution made in this paper is that we present a learning theory analysis on the modal regression estimator resulted from the ERM scheme. The theoretical results in our analysis are concerned with the modal regression risk consistency, the generalization risk consistency, the function estimation ability of the modal regression estimator, and their relations, see Section \ref{sec::learning_theory} for details.   
\item It should be highlighted that, as we shall also explain below, the study in this paper is originally motivated by pursuing some further understanding of the maximum correntropy based regression (MCCR), which was recently investigated in \cite{fenglearning}. In particular, this study is started with the realization that MCCR with a tending-to-zero scale parameter is modal regression, see Section \ref{sec::correntropy} for details. It turns out that the study conducted in this paper brings us some new perspectives and a deeper understanding of MCCR.
\end{itemize}

\begin{table}[t] 
\setlength{\tabcolsep}{15pt}
\setlength{\abovecaptionskip}{5pt}
\setlength{\belowcaptionskip}{5pt}
\centering
\captionsetup{justification=centering}
\vspace{.5em}
\begin{tabular}{@{} ll@{}}
  \toprule
    notation & meaning  \\ \midrule
     $\mathcal{X}$, $\mathcal{Y}$ & the independent variable space and the dependent variable space, respectively\\
    $X, Y$ & random variables taking values in $\mathcal{X}$ and $\mathcal{Y}$, respectively\\
    $x,y$ & realizations of $X$ and $Y$, respectively\\ 
    $\mathcal{M}$ & the function set comprised of all measurable function from $\mathcal{X}$ to $\mathbb{R}$\\
        $\epsilon$ & the noise variable specified by the residual $Y-f^\star(X)$\\ 
        $\mathbf{z}$ & a set of $n$-size realizations of $(X,Y)$ with $\mathbf{z}:=\{(x_i,y_i)\}_{i=1}^n$ \\
    $E_f$ & the random variable induced by the residual $Y-f(X)$ \\
    $\mathcal{H}$ & a hypothesis space that is assumed to be a compact subset of $C(\mathcal{X})$\\
    $K_\sigma$ & a smoothing kernel with the bandwidth $\sigma$ \\
    $\rho$ &  the joint probability distribution of $X\times Y$ \\
    $\rho_{\mathsmaller{\mathcal{X}}}$ & the marginal distribution of $X$ \\
    $L_{\rho_{\mathsmaller{\mathcal{X}}}}^2$ & the function space of square-integrable functions with respect to   $\rho_{\mathsmaller{\mathcal{X}}}$\\
    $p_\mathsmaller{E_f}$ or $p_\mathsmaller{f}$   & the density function of the random variable $E_f$ \\
      $p_{\mathsmaller{Y|X}}$ & the conditional density  of $Y$ conditioned on $X$	\\
      $p_{\mathsmaller{X,Y}}$ & the joint density of $X$ and $Y$\\
    $p_\mathsmaller{\epsilon|X}$ & the conditional density  of $\epsilon$ conditioned on $X$\\
    $f^\star$ & the underlying truth function in modal regression, see formula \eqref{data_generating_model}  \\
    $f_\mathsmaller{\mathrm{M}}	$ & the modal regression function or the conditional mode function, see formula \eqref{modal_function}     \\
    $f_{\mathbf{z},\sigma}$ & the empirical modal regression estimator  in $\mathcal{H}$, see formula \eqref{relaxation}  \\
    $f_{\mathcal{H},\sigma}$ & the data-free modal regression estimator in $\mathcal{H}$, see formula \eqref{target_function_model}  \\
    $f_\mathcal{H}$ & the data-free least squares regression estimator  in $\mathcal{H}$ \\
     $\mathcal{R}(f)$  & the modal regression risk for the hypothesis $f:\mathcal{X}\rightarrow \mathbb{R}$   \\
    $\mathcal{R}^\sigma(f)$ & the data-free generalization risk for the hypothesis $f:\mathcal{X}\rightarrow \mathbb{R}$  \\
     $\mathcal{R}_n^\sigma(f)$ & the empirical generalization risk for the hypothesis  $f:\mathcal{X}\rightarrow \mathbb{R}$\\
  \bottomrule
\end{tabular}
\caption{A list of notations and their definitions in this paper}\label{Table_properties}\label{table::notion}
\end{table}  

\subsection{Structure of This Paper} 
This paper is organized as follows: in Section \ref{sec::modal_regression_formulation}, we formulate the modal regression problem within the statistical learning framework. To this end, we introduce the modal regression function in Subsection \ref{modal_regression_setting}. We define the modal regression risk and characterize its \textit{Bayes} rule in Subsection \ref{subsec::Bayes_rule}. A kernel density estimation interpretation and an empirical risk minimization perspective of modal regression are provided in Subsections \ref{sub::mr_kde} and \ref{ERM_perspective}, respectively. Section \ref{sec::learning_theory} is devoted to developing a learning theory for modal regression. The modal regression calibration problem (see Subsection \ref{subsec::modal_regression_calibration}), the convergence of the excess generalization risk (see Subsection \ref{subsec::excess_generalization_risk}), and the function estimation calibration problem (see Subsection \ref{subsec::function_calibration}) are studied by applying standard learning theory arguments. Comparisons between our study and the existing ones are also mentioned in this section. In Section \ref{sec::correntropy}, we interpret MCCR from a modal regression viewpoint by suggesting that MCCR with a tending-to-zero scale parameter is essentially modal regression.  Since one of the main motivations of the present study is to understand MCCR within the statistical learning framework and having realized that MCCR with a tending-to-zero scale parameter is modal regression, we, therefore, retrospect MCCR in Section \ref{sec::back_to_MCCR} by applying the theory developed in Section \ref{sec::learning_theory} and depict a general picture of MCCR. Section \ref{sec::practical_issues} is concerned with the implementation issues in modal regression such as model selection and computational algorithms. Numerical validations will be provided in this section. We close this paper in Section \ref{sec::conclusion} with  conclusions. For the sake of readability, a list of notations and their definitions in this paper is provided in Table \ref{table::notion}.

\section{A Statistical Learning Framework for Modal Regression}\label{sec::modal_regression_formulation}

\subsection{Formulating the Modal Regression Problem}\label{modal_regression_setting}
We first formulate the modal regression problem formally in this subsection. To this end, we first assume that we are given a set of i.i.d observations $\mathbf{z}$ that are generated by 
\begin{align}\label{data_generating_model}
Y=f^\star(X)+\epsilon,
\end{align} 
where the mode of the conditional distribution of $\epsilon$ at any $x\in\mathcal{X}$ is assumed to be zero. That is, ${\sf{mode}}(\epsilon\,|\, X=x):=\arg\max_{t\in\mathbb{R}}p_{\mathsmaller{\epsilon|X}}(t\,|\,X=x)=0$ for any $x\in\mathcal{X}$, where $p_{\mathsmaller{\epsilon|X}}$ is the conditional density of $\epsilon$ conditioned on $X$. It is obvious from \eqref{data_generating_model} that under the zero-mode noise assumption, it holds that ${\sf{mode}}(Y|X)=f^\star(X)$. We further assume that $p_{\mathsmaller{\epsilon|X}}$ is continuous  and bounded on $\mathbb{R}$ for any $x\in\mathcal{X}$. Here, it should be remarked that in this study we do not assume either the homogeneity or the symmetry of the distribution of the noise $\epsilon$. In other words, the heterogeneity of the distribution of the residuals or the skewed noise distribution is allowed. 

In modal regression problems, we aim at approximating the \textit{modal regression function }(see formula (1.1) in \cite{collomb1987note}): 
\begin{definition}[Modal Regression Function] 
The \textbf{modal regression function} $f_\mathsmaller{\mathrm{M}}:\mathcal{X}\rightarrow\mathbb{R}$  is defined as
\begin{align}\label{modal_function}
f_\mathsmaller{\mathrm{M}}(x):=\arg\max_{t\in\mathbb{R}}p_{\mathsmaller{Y|X}}(t|\,X=x), \,\,x\in\mathcal{X},
\end{align}
where $p_{\mathsmaller{Y|X}}(\cdot|X)$ denotes the conditional density of $Y$ conditioned on $X$.
\end{definition}

Throughout this paper, we assume that the modal regression function $f_\mathsmaller{\mathrm{M}}$ is well-defined on $\mathcal{X}$. That is, $\arg\max_{t\in\mathbb{R}}p_{\mathsmaller{Y|X}}(t\mid\,X=x)$ is assumed to exist and be unique for any fixed $x\in\mathcal{X}$. Obviously, this is equivalent to assuming the existence and uniqueness of the global mode of the conditional density $p_{\mathsmaller{Y|X}}$. On the other hand, due to the zero-mode assumption of the conditional distribution of $\epsilon$ in \eqref{data_generating_model} for any $x\in\mathcal{X}$, we know that $f_\mathsmaller{\mathrm{M}} \equiv f^\star$. Consequently, the learning for modal regression problem is equivalent to the problem of learning the modal regression function $f_\mathsmaller{\mathrm{M}}$, and thus $f^\star$. Said differently, $f_\mathsmaller{\mathrm{M}}$ is the so-called target hypothesis.

From the definition, the modal regression function $f_\mathsmaller{\mathrm{M}}$ is defined as the maximum of the conditional density $p_{\mathsmaller{Y|X}}$ conditioned on $X$. Note that, maximizing the conditional density is equivalent to maximizing the joint density $p_\mathsmaller{X,Y}$ for any fixed realization of $X$. Therefore, it is direct to see that one can approximate $f_\mathsmaller{\mathrm{M}}$ by maximizing the conditional density $p_{\mathsmaller{Y|X}}$ or the joint density $p_\mathsmaller{X,Y}$, both of which can be estimated via kernel density estimation. This is, in fact, what most of the existing studies on modal regression do \citep[see e.g.,][]{collomb1987note,chen2016nonparametric,yao2016nonparametric}. However, estimating the conditional density $p_{\mathsmaller{Y|X}}$ or the joint density $p_\mathsmaller{X,Y}$ via kernel density estimation suffers from the \textit{curse of dimensionality} and is not feasible when the dimension of the input space is high. In this study, we are interested in an empirical risk minimization approach that is dimension-insensitive as formulated later.

\subsection{Modeling the Modal Regression Risk and Characterizing the \textit{Bayes} Rule}\label{subsec::Bayes_rule}
 
To be in a position to carry out a statistical learning assessment of modal regression, besides the target hypothesis defined above, we also need to devise a fitting risk that measures the goodness-of-fit when a candidate hypothesis is considered. The newly devised fitting risk should vote the target hypothesis \eqref{modal_function} as the best candidate when the hypothesis space is sufficiently large. This gives the main purpose of this subsection.

\begin{definition}[Modal Regression Risk]\label{modal_regression_risk}
For a  measurable function $f:\mathcal{X}\rightarrow \mathbb{R}$, its \textbf{modal regression risk} $\mathcal{R}(f)$ is defined as
\begin{align}\label{risk_functional_RF}
\mathcal{R}(f)=\int_{\mathcal{X}}p_{\mathsmaller{Y|X}}(f(x)|X=x)\mathrm{d}\rho_{\mathsmaller{\mathcal{X}}}(x).
\end{align}
\end{definition}

Analogously to learning for regression and classification scenarios \citep[see, e.g.,][]{cucker2007learning,steinwart2008support}, we denote the \textbf{\textit{Bayes} rule} of modal regression as the ``best" hypothesis favored by the above modal regression risk over the measurable function set $\mathcal{M}$ (comprised of all measurable functions from $\mathcal{X}$ to $\mathbb{R}$). The following conclusion indicates that the target hypothesis $f_\mathsmaller{\mathrm{M}}$ is exactly the \textit{Bayes} rule of modal regression.

\begin{theorem}\label{bayes_rule}
The modal regression function $f_\mathsmaller{\mathrm{M}}$ in \eqref{modal_function} gives the \textit{Bayes} rule of modal regression. That is, 
\begin{align*}
f_\mathsmaller{\mathrm{M}}=\arg\max_{f\in\mathcal{M}}\mathcal{R}(f).
\end{align*}
\end{theorem}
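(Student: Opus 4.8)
The plan is to show that $f_\mathsmaller{\mathrm{M}}$ maximizes $\mathcal{R}(f)$ over $\mathcal{M}$ by a pointwise argument. The key observation is that the modal regression risk $\mathcal{R}(f) = \int_{\mathcal{X}} p_{\mathsmaller{Y|X}}(f(x)|X=x)\,\mathrm{d}\rho_{\mathsmaller{\mathcal{X}}}(x)$ is an integral over $\mathcal{X}$ of a quantity that, for each fixed $x$, depends only on the value $f(x)$. This decoupling across $x$ is what makes a pointwise maximization valid.

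First I would fix an arbitrary $x \in \mathcal{X}$ and consider the integrand $t \mapsto p_{\mathsmaller{Y|X}}(t \mid X=x)$ as a function of the single real variable $t$. By the definition of the modal regression function in \eqref{modal_function}, we have precisely
\begin{align*}
p_{\mathsmaller{Y|X}}(f_\mathsmaller{\mathrm{M}}(x) \mid X=x) = \max_{t\in\mathbb{R}} p_{\mathsmaller{Y|X}}(t \mid X=x),
\end{align*}
where the maximum exists and is attained uniquely by the standing assumption that $f_\mathsmaller{\mathrm{M}}$ is well-defined on $\mathcal{X}$. Consequently, for \emph{any} measurable $f\in\mathcal{M}$ and any $x\in\mathcal{X}$, since $f(x)$ is just some particular real number, it holds that
\begin{align*}
p_{\mathsmaller{Y|X}}(f(x) \mid X=x) \le p_{\mathsmaller{Y|X}}(f_\mathsmaller{\mathrm{M}}(x) \mid X=x).
\end{align*}

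Next I would integrate this pointwise inequality against the marginal $\rho_{\mathsmaller{\mathcal{X}}}$. Monotonicity of the integral immediately yields $\mathcal{R}(f) \le \mathcal{R}(f_\mathsmaller{\mathrm{M}})$ for every $f\in\mathcal{M}$, and since $f_\mathsmaller{\mathrm{M}}$ itself lies in $\mathcal{M}$ (being measurable, which should be noted or assumed), equality is achieved at $f_\mathsmaller{\mathrm{M}}$. This establishes that $f_\mathsmaller{\mathrm{M}} = \arg\max_{f\in\mathcal{M}}\mathcal{R}(f)$. I would also briefly remark that the uniqueness of the maximizer (as an element of $L^2_{\rho_{\mathsmaller{\mathcal{X}}}}$, i.e. up to $\rho_{\mathsmaller{\mathcal{X}}}$-null sets) follows from the assumed uniqueness of the conditional mode at each $x$: any other maximizer must agree with $f_\mathsmaller{\mathrm{M}}$ almost everywhere, since attaining the pointwise maximum forces $f(x) = f_\mathsmaller{\mathrm{M}}(x)$ for $\rho_{\mathsmaller{\mathcal{X}}}$-almost every $x$.

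The argument is essentially immediate once the risk is read correctly, so there is no serious analytical obstacle here; the proof is a one-line pointwise-to-integral passage. The only genuine subtlety I would be careful about is \textbf{measurability}: one should confirm that $f_\mathsmaller{\mathrm{M}}$ as defined by the $\arg\max$ is itself a measurable function of $x$, so that it legitimately belongs to $\mathcal{M}$ and the integral $\mathcal{R}(f_\mathsmaller{\mathrm{M}})$ is well-defined. This typically follows from the continuity and boundedness of $p_{\mathsmaller{\epsilon|X}}$ (hence of $p_{\mathsmaller{Y|X}}$) assumed earlier together with the uniqueness of the mode, via a measurable selection theorem, but it is worth flagging rather than proving in detail.
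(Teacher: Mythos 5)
Your proof is correct and follows essentially the same route as the paper's: the pointwise inequality $p_{\mathsmaller{Y|X}}(f(x)\mid X=x)\le p_{\mathsmaller{Y|X}}(f_\mathsmaller{\mathrm{M}}(x)\mid X=x)$ from the definition of $f_\mathsmaller{\mathrm{M}}$, integrated against $\rho_{\mathsmaller{\mathcal{X}}}$. Your added remarks on the measurability of $f_\mathsmaller{\mathrm{M}}$ and uniqueness up to $\rho_{\mathsmaller{\mathcal{X}}}$-null sets are sensible refinements the paper leaves implicit, but they do not change the argument.
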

\begin{proof} 
Recall that the conditional mode function $f_\mathsmaller{\mathrm{M}}$ is given as
\begin{align*} 
f_\mathsmaller{\mathrm{M}}(x)=\arg\max_{t\in\mathbb{R}}p_{\mathsmaller{Y|X}}(t\,|\,X=x), \,\,x\in\mathcal{X}.
\end{align*}
Following the modal regression risk defined in Definition \ref{modal_regression_risk}, for any measurable function $f\in\mathcal{M}$, we have
\begin{align*}
\mathcal{R}(f)=\int_{\mathcal{X}}p_{\mathsmaller{Y|X}}	(f(x)|X=x)\mathrm{d}\rho_{\mathsmaller{\mathcal{X}}}(x)\leq \int_{\mathcal{X}}  p_{\mathsmaller{Y|X}}	(f_\mathsmaller{\mathrm{M}}(x)|X=x) \mathrm{d}\rho_{\mathsmaller{\mathcal{X}}}(x)=\mathcal{R}(f_\mathsmaller{\mathrm{M}}),
\end{align*}
which directly yields
\begin{align*}
f_\mathsmaller{\mathrm{M}}=\arg\max_{f\in\mathcal{M}}\mathcal{R}(f).
\end{align*}
This completes the proof of Theorem \ref{bayes_rule}.
\end{proof}

The plausibility of the above-defined modal regression risk stems from the fact that $f_\mathsmaller{\mathrm{M}}$ is the \textit{Bayes} rule of modal regression, as justified by Theorem \ref{bayes_rule}. With the modal regression risk being defined and recalling that $f_\mathsmaller{\mathrm{M}}$ maximizes the modal regression risk, the most direct way to learn $f_\mathsmaller{\mathrm{M}}$ is to maximize the sample analogy of the modal regression risk. Unfortunately, this is intractable since the discretization of an unknown conditional density is involved. In the next subsection, to circumvent this problem, we introduce a surrogate of the modal regression risk.

\begin{remark}
We now give a remark on the terminology ``risk". For any measurable function $f:\mathcal{X}\rightarrow \mathbb{R}$, the modal regression risk $\mathcal{R}(f)$ in Definition \ref{modal_regression_risk} can be regarded as a measure of the extent to which the function $f$ fits the \textit{Bayes} rule $f_\mathsmaller{\mathrm{M}}$ in the $\mathcal{R}(\cdot)$ sense. Therefore, the terminology ``risk" is not used as what is commonly referred to in the statistical learning literature. However, in what follows, given the one-to-one correspondence between the corresponding maximization and minimization problems, we still term  $\mathcal{R}(f)$ as the (modal regression) risk of $f$.  
\end{remark}

\subsection{Learning for Modal Regression via Kernel Density Estimation}\label{sub::mr_kde}
We now show that the modal regression problem can be tackled by applying the kernel density estimation technique. To this purpose, let $f: \mathcal{X}\rightarrow \mathbb{R}$ be a measurable function and denote $E_f$ as the random variable induced by the residual $Y-f(X)$, where the subscript $f$ indicates its dependence on $f$. We also denote $p_\mathsmaller{E_f}$, or simply $p_\mathsmaller{f}$, as the density function of the random variable $E_f$ and denote $p_{\epsilon|X}$ as the conditional density of the random variable $\epsilon = Y-f^\star(X)$. The following theorem, which was first established in \cite{fan2014consistency}, relates the modal regression risk of $f$ to $p_{\epsilon|X}$ and $p_\mathsmaller{E_f}$.   

\begin{theorem}\label{density_change_variable}
Let $f: \mathcal{X}\rightarrow \mathbb{R}$ be a measurable function. Then,  
\begin{align*}
\int_\mathcal{X}p_{\epsilon|X}(\cdot + f(x) - f^\star(x)|X=x)\mathrm{d}\rho_{\mathsmaller{\mathcal{X}}}(x) 
\end{align*}
is a density of the random variable $E_f:=Y-f(X)$, which is denoted as $p_\mathsmaller{E_f}$. Correspondingly, we have $p_\mathsmaller{E_f}(0)=\mathcal{R}(f)$.
\end{theorem}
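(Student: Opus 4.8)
The plan is to identify the density of $E_f$ by testing against an arbitrary bounded measurable function and reading off the density from the resulting expression, and then to specialize that formula to the point $0$. The structural fact driving everything is that, conditioned on $X=x$, the random variable $E_f$ is a pure translate of the noise. Indeed, rewriting $E_f$ via the data-generating model \eqref{data_generating_model}, since $Y=f^\star(X)+\epsilon$ we have $E_f = Y-f(X) = \epsilon - (f(X)-f^\star(X))$, so for each fixed $x$ the random variable $E_f$ is just $\epsilon$ shifted by the constant $f^\star(x)-f(x)$. Its conditional density is therefore obtained from $p_{\epsilon|X}(\cdot\,|\,X=x)$ by a translation, and the claimed marginal density should emerge upon integrating this conditional density against $\rho_{\mathsmaller{\mathcal{X}}}$.

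To make this precise I would condition on $X$ and change variables twice. For any bounded measurable $g:\mathbb{R}\to\mathbb{R}$,
\begin{align*}
\mathbb{E}[g(E_f)] = \int_{\mathcal{X}}\int_{\mathbb{R}} g(y-f(x))\,p_{\mathsmaller{Y|X}}(y\,|\,X=x)\,\mathrm{d}y\,\mathrm{d}\rho_{\mathsmaller{\mathcal{X}}}(x).
\end{align*}
Substituting $y=f^\star(x)+s$ and using $p_{\mathsmaller{Y|X}}(y\,|\,X=x)=p_{\epsilon|X}(s\,|\,X=x)$ (itself a consequence of $Y=f^\star(X)+\epsilon$), and then substituting $t=s-(f(x)-f^\star(x))$ in the inner integral, converts the integrand into $g(t)\,p_{\epsilon|X}(t+f(x)-f^\star(x)\,|\,X=x)$; both substitutions have Jacobian one. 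An application of Fubini's theorem then pulls the $t$-integral to the outside and exhibits $\int_{\mathcal{X}} p_{\epsilon|X}(t+f(x)-f^\star(x)\,|\,X=x)\,\mathrm{d}\rho_{\mathsmaller{\mathcal{X}}}(x)$ as the density of $E_f$ at $t$, which is exactly the claimed $p_\mathsmaller{E_f}$.

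Finally, I would set $t=0$ in this density formula to obtain $p_\mathsmaller{E_f}(0)=\int_{\mathcal{X}} p_{\epsilon|X}(f(x)-f^\star(x)\,|\,X=x)\,\mathrm{d}\rho_{\mathsmaller{\mathcal{X}}}(x)$, and then invoke the same translation identity, now in the form $p_{\mathsmaller{Y|X}}(f(x)\,|\,X=x)=p_{\epsilon|X}(f(x)-f^\star(x)\,|\,X=x)$, to recognize the integrand as that of the modal regression risk \eqref{risk_functional_RF}, giving $p_\mathsmaller{E_f}(0)=\mathcal{R}(f)$. The arguments are elementary, so the only real care is technical: justifying the interchange of integration order via Fubini and the measurability of $(t,x)\mapsto p_{\epsilon|X}(t+f(x)-f^\star(x)\,|\,X=x)$. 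Here the standing assumptions that $p_{\epsilon|X}$ is continuous and bounded on $\mathbb{R}$ for every $x$, together with the boundedness of $g$, make the conditional density integrable and the two linear changes of variables legitimate, so I expect this to be a mild bookkeeping matter rather than a serious obstacle.
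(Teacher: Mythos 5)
Your argument is correct and follows essentially the same route as the paper's proof: both rest on the observation that, conditionally on $X=x$, the variable $E_f$ is the translate $\epsilon-(f(x)-f^\star(x))$ of the noise, so its density is the translated conditional noise density integrated against $\rho_{\mathsmaller{\mathcal{X}}}$, and evaluating at $0$ recovers $\mathcal{R}(f)$ via the identity $p_{\epsilon|X}(f(x)-f^\star(x)\mid X=x)=p_{\mathsmaller{Y|X}}(f(x)\mid X=x)$. The only difference is that you make the density identification rigorous by testing against bounded measurable $g$ and invoking Fubini, whereas the paper simply asserts the translated-density formula; this is a welcome tightening, not a different proof.
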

\begin{proof} 
From the model assumption that $\epsilon = Y - f^\star(X)$, we have
\begin{align*}
\epsilon = E_f + f(X) - f^\star(X).
\end{align*}
As a result, the density function of the error variable $E_f$ can be expressed as
\begin{align*}
\int_\mathcal{X}p_{\epsilon|X}(\cdot + f(x) - f^\star(x)|X=x)\mathrm{d}\rho_{\mathsmaller{\mathcal{X}}}(x) 
\end{align*}
and denoted by $p_\mathsmaller{E_f}$. Moreover, from the definition of the risk functional $\mathcal{R}(\cdot)$ in \eqref{risk_functional_RF}, we know that  
\begin{align*}
p_\mathsmaller{E_f}(0)&=\int_{\mathcal{X}}p_{\epsilon|X}( f(x) - f^\star(x)|X=x) \mathrm{d}\rho_{\mathsmaller{\mathcal{X}}}(x)\\
&= \int_{\mathcal{X}}p_{\mathsmaller{Y|X}}(f(x)|X=x)\mathrm{d}\rho_{\mathsmaller{\mathcal{X}}}(x)\\
&= \mathcal{R}(f).
\end{align*}
This completes the proof of Theorem \ref{density_change_variable}.
\end{proof}

From Theorem \ref{density_change_variable}, the hypothesis $f$ that maximizes the modal regression risk $\mathcal{R}(f)$ is the one that maximizes the density of $E_f:=Y-f(X)$ at $0$, which can be estimated non-parametrically. In this study, the kernel density estimation technique is tailored to modal regression with the help of the modal regression kernel defined below. 
 
\begin{definition}[Modal Regression Kernel]\label{modal_regression_kernel}
A kernel $K_\sigma:\mathbb{R}\times\mathbb{R}\rightarrow \mathbb{R}_+$ is said to be a \textbf{modal regression kernel} with the representing function $\phi$ and the bandwidth parameter $\sigma>0$ if there exists a function $\phi:\mathbb{R}\rightarrow [0,\infty)$ such that $K_\sigma(u_1,u_2)=\phi\left(\frac{u_1-u_2}{\sigma}\right)$ for any $u_1,u_2\in\mathbb{R}$, $\phi(u)=\phi(-u)$, $\phi(u)\leq \phi(0)$ for any $u\in\mathbb{R}$, and $\int_\mathbb{R}\phi(u)\mathrm{d}u=1$. 
\end{definition}

According to Definition \ref{modal_regression_kernel}, it is easy to see that common smoothing kernels \citep[see, e.g.,][]{wand1994kernel}   such as the Naive kernel, the Gaussian kernel, the Epanechnikov kernel, and the Triangular kernel are modal regression kernels. Their corresponding representing functions can be easily deduced with simple computations. For a modal regression kernel $K_\sigma$ with the representing function $\phi$, throughout this paper, without loss of generality, we assume $\phi(0)=1$. 

As a consequence of Theorem \ref{density_change_variable}, for any measurable function $f$, we know that $p_\mathsmaller{f}(0)=\mathcal{R}(f)$. With the help of a modal regression kernel $K_\sigma$, it is immediate to see that an empirical kernel density estimator $\hat{p}_\mathsmaller{f}$ for $p_\mathsmaller{f}$ at $0$ can be formulated as follows 
\begin{align*}
\hat{p}_\mathsmaller{f}(0)=\frac{1}{n\sigma}\sum_{i=1}^n K_\sigma (y_i-f(x_i),0)=\frac{1}{n\sigma}\sum_{i=1}^n K_\sigma (y_i, f(x_i)):=\mathcal{R}_n^\sigma(f).
\end{align*}
Therefore, when confined to a hypothesis space $\mathcal{H}$, learning a function $f$ that maximizes the modal regression risk is cast as learning the function $f$ that maximizes the value of the empirical density estimator $\hat{p}_\mathsmaller{f}$ at $0$. Thus, the empirical target hypothesis is modeled as    
\begin{align}\label{relaxation}
\begin{split}
f_{\mathbf{z},\sigma}:&=\arg\max_{f\in\mathcal{H}}\hat{p}_\mathsmaller{f}(0)\\
&=\arg\max_{f\in\mathcal{H}}\mathcal{R}_n^\sigma(f),
\end{split}
\end{align}
where $\mathcal{H}$ is assumed to be a compact subset of $C(\mathcal{X})$ throughout this paper. The population version of $f_{\mathbf{z},\sigma}$ can be expressed as
\begin{align}\label{target_function_model}
f_{\mathcal{H},\sigma}:=\arg\max_{f\in\mathcal{H}}\mathcal{R}^\sigma(f),
\end{align}
where $\mathcal{R}^\sigma(\cdot)$ is the expectation of $\mathcal{R}_n^\sigma(\cdot)$ with respect to the random samples $\mathbf{z}$ and for any $f:\mathcal{X}\rightarrow\mathbb{R}$, it can be expressed as 
\begin{align*}
\mathcal{R}^\sigma(f)= \frac{1}{\sigma}\int_{\mathcal{X}\times\mathcal{Y}}\phi\left(\frac{y-f(x)}{\sigma}\right)\mathrm{d}\rho(x,y).	
\end{align*}

The risk functional $\mathcal{R}^\sigma(f)$ defined above gives the \textbf{generalization risk} of $f$ when a modal regression kernel $K_\sigma$ with the representing function $\phi$ is adopted. As we shall see later, it can be seen as a surrogate of the true modal regression risk $\mathcal{R}(f)$ since $\mathcal{R}^\sigma(f)$ approximates $\mathcal{R}(f)$ when $\sigma\rightarrow 0$. The interpretation of modal regression from a kernel density estimation viewpoint explains the requirement that $\int_{\mathbb{R}}\phi(u)\mathrm{d}u=1$ in Definition \ref{modal_regression_kernel}.

\subsection{Modal Regression: an Empirical Risk Minimization View}\label{ERM_perspective}
In the preceding subsection, we showed that the modal regression scheme \eqref{relaxation} can be interpreted from a kernel density estimation point of view. Maximizing the value of the kernel density estimator for $E_f$ at $0$ encourages the considered hypothesis $f$ to approximate the projection of the \textit{Bayes} rule onto $\mathcal{H}$, i.e., $f_{\mathcal{H},\sigma}$. In this subsection, we show that one can also interpret the modal regression scheme \eqref{relaxation} by using the language of empirical risk minimization.   

To proceed, let us consider a modal regression kernel $K_\sigma$ with the representing function $\phi$ and the scale parameter $\sigma>0$. We then introduce the following distance-based modal regression loss $\phi_\sigma:\mathbb{R}\rightarrow [0,\infty)$:
\begin{align}\label{introduced_loss}
\phi_\sigma(y-f(x))=\sigma^{-1}\left(1-\phi\left((y-f(x))\sigma^{-1}\right)\right).
\end{align}
Based on the newly introduced loss $\phi_\sigma$, the modal regression scheme \eqref{relaxation} can be reformulated as follows
\begin{align}\label{ERM_empirical_target}
f_{\mathbf{z},\sigma}=\arg\min_{f\in\mathcal{H}}\frac{1}{n}\sum_{i=1}^n \phi_\sigma(y_i,f(x_i)),
\end{align}
and, similarly, its data-free counterpart can be formulated as
\begin{align}\label{ERM_expected_target}
f_{\mathcal{H},\sigma}=\arg\min_{f\in\mathcal{H}}\int_{\mathcal{X}\times\mathcal{Y}} \phi_\sigma(y,f(x))\mathrm{d}\rho.
\end{align}
It is easy to see that the empirical estimator \eqref{ERM_empirical_target} is an M-estimator and the two formulations of $f_{\mathbf{z},\sigma}$ in \eqref{relaxation} and \eqref{ERM_empirical_target}  are, in fact, equivalent. Similarly, one also obtains the same target hypothesis from  \eqref{target_function_model} and \eqref{ERM_expected_target}.

\begin{remark}
For formulation simplification, whenever referred to herein, $f_{\mathbf{z},\sigma}$ and $f_{\mathcal{H},\sigma}$ will be pointed to the estimators formulated by \eqref{relaxation} and \eqref{target_function_model}, respectively, while keeping in mind that the conducted analysis on $f_{\mathbf{z},\sigma}$ is inspired by and within the ERM framework. 
\end{remark}

\section{A Learning Theory of Modal Regression}\label{sec::learning_theory}
In this section, we aim to develop a learning theory for modal regression which can be used to assess the statistical learning performance of the modal regression estimator $f_{\mathbf{z},\sigma}$.

\subsection{Learning the Conditional Mode: Three Building Blocks}\label{Subsection::three_blocks}
In Section \ref{sec::modal_regression_formulation},  for a given hypothesis $f$, the modal regression risk $\mathcal{R}(f)$ is defined; moreover, it turns out that $f_{\mathsmaller{\mathrm{M}}}$ is the \textit{Bayes} rule of modal regression. On the other hand, we show that the modal regression estimator can be learned via maximizing the risk functional $\mathcal{R}^\sigma_n(\cdot)$. Recalling that the central concern in learning theory is risk consistency under various notions and following the clue of existing learning theory studies on the binary-classification problem, it is natural and necessary to investigate the following three problems:  
\begin{enumerate}
\item The problem of the excess generalization risk consistency and convergence rates, i.e., the convergence from $\mathcal{R}^\sigma(f_{\mathbf{z},\sigma})$ to $\mathcal{R}^\sigma(f^\star)$.
\item The modal regression calibration problem, i.e., whether the convergence from  $\mathcal{R}^\sigma(f_{\mathbf{z},\sigma})$ to $\mathcal{R}^\sigma(f^\star)$ implies the convergence from $\mathcal{R}(f_{\mathbf{z},\sigma})$ to $\mathcal{R}(f^\star)$? 
\item The function estimation calibration problem, i.e., whether the convergence from $\mathcal{R}(f_{\mathbf{z},\sigma})$ to $\mathcal{R}(f^\star)$ implies the convergence from $f_{\mathbf{z},\sigma}$ to $f^\star$?
\end{enumerate}

\begin{figure}[h]
\begin{center}
\begin{tikzpicture}
\tikzset{trim left=0.0 cm}
\draw[fill=blue!0!white, solid] (0,0) rectangle (3.3,1.2)node[pos=.5] {$f_{\mathbf{z},\sigma}\rightarrow f^\star$};
\draw[fill=blue!0!white, solid] (5,0) rectangle (8.5,1.2)node[pos=.5] {$\mathcal{R}(f_{\mathbf{z},\sigma})\rightarrow\mathcal{R}(f^\star)$};
\draw[fill=blue!0!white, solid] (10,0) rectangle (13.5,1.2)node[pos=.5] {$\mathcal{R}^\sigma(f_{\mathbf{z},\sigma})\rightarrow\mathcal{R}^\sigma(f^\star)$};
\path[draw=black,solid,line width=2mm,fill=blue, preaction={-triangle 90,thin,draw,shorten >=-1mm}
]  (4.9,0.6) --(3.4,0.6);
\path[draw=black,solid,line width=2mm,fill=blue, preaction={-triangle 90,thin,draw,shorten >=-1mm} 
]  (9.9,0.6) --(8.6,0.6);
\end{tikzpicture}
\end{center}
\caption{An illustration of the three building blocks in learning for modal regression. The left block stands for the function estimation consistency of $f_{\mathbf{z},\sigma}$, the middle block denotes the modal regression consistency of $f_{\mathbf{z},\sigma}$, while the right block represents the excess generalization risk consistency of $f_{\mathbf{z},\sigma}$.}\label{block_illustration}
\end{figure} 

The above three problems are fundamental in conducting a learning theory analysis on modal regression and serve as three main building blocks. Detailed explorations will be expanded in the following subsections.

\subsection{Towards the Modal Regression Calibration Problem}\label{subsec::modal_regression_calibration}
We first investigate the modal regression calibration problem stated in Question $1$, i.e., whether the convergence from  $\mathcal{R}^\sigma(f_{\mathbf{z},\sigma})$ to $\mathcal{R}^\sigma(f^\star)$ implies the convergence from $\mathcal{R}(f_{\mathbf{z},\sigma})$ to $\mathcal{R}(f^\star)$. To this end, we need to confine ourselves to the calibrated modal regression kernel defined below.

\begin{definition}[Calibrated Modal Regression Kernel]\label{calibrated_kernel}
A modal regression kernel $K_\sigma$ with the representing function $\phi$ is said to be a \textbf{calibrated modal regression kernel} if it satisfies the following conditions:
\begin{itemize}
\item [$(i)$]  $\phi$ is bounded;
\item [$(ii)$] $\phi$ is Lipschitz continuous on $\mathbb{R}$ with the Lipschitz constant $L$;
\item [$(iii)$] $\int_\mathbb{R}u^2\phi(u)\mathrm{d}u<\infty$.
\end{itemize}
\end{definition}

Another restriction we need to impose is on the conditional density $p_{\epsilon|X}$ as follows:

\begin{assumption}\label{smooth_density}
The conditional density of $\epsilon$ given $X$, namely, $p_{\epsilon|X}$, is second-order continuously differentiable and $\|p_{\epsilon|X}^{\prime\prime}\|_\infty$ is bounded from above.
\end{assumption}
 
\begin{theorem}\label{modal_regression_calibration}
Suppose that Assumption \ref{smooth_density} holds and let $K_\sigma$ be a calibrated modal regression kernel with the representing function $\phi$ and the scale parameter $\sigma$. For any measurable function $f:\mathcal{X}\rightarrow\mathbb{R}$, it holds that
\begin{align*}
\Big|\{\mathcal{R}(f^\star)-\mathcal{R}(f)\}-\{\mathcal{R}^\sigma(f^\star)-\mathcal{R}^\sigma(f)\}\Big|\leq c_1 \sigma^2,
\end{align*}
where $c_1= \|p^{\prime\prime}_{\epsilon|\mathsmaller{X}}\|_\infty\int_\mathbb{R} u^2\phi(u)\mathrm{d}u$.
\end{theorem}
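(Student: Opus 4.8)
The plan is to compute the one-sided discrepancy $\mathcal{R}^\sigma(f)-\mathcal{R}(f)$ for an arbitrary measurable $f$ by a change of variables followed by a second-order Taylor expansion, then to apply the resulting estimate to both $f$ and $f^\star$ and combine them by the triangle inequality. The evenness of $\phi$ will be the mechanism that upgrades a naive $O(\sigma)$ bound to the desired $O(\sigma^2)$ rate, and Assumption \ref{smooth_density} together with part $(ii)$ of the calibrated-kernel definition will control the remainder.

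First I would rewrite the generalization risk by disintegrating $\rho$ over $X$ and changing variables. Setting $d(x):=f(x)-f^\star(x)$ and writing $\mathrm{d}\rho(x,y)=p_{\mathsmaller{Y|X}}(y|X=x)\,\mathrm{d}y\,\mathrm{d}\rho_{\mathsmaller{\mathcal{X}}}(x)$, the substitutions $t=y-f^\star(x)$ and then $u=(t-d(x))/\sigma$ in the inner integral turn the defining expression
$$\mathcal{R}^\sigma(f)=\frac{1}{\sigma}\int_{\mathcal{X}\times\mathcal{Y}}\phi\Big(\frac{y-f(x)}{\sigma}\Big)\,\mathrm{d}\rho(x,y)$$
into
$$\mathcal{R}^\sigma(f)=\int_{\mathcal{X}}\int_{\mathbb{R}}\phi(u)\,p_{\epsilon|X}\big(\sigma u+d(x)\,\big|\,X=x\big)\,\mathrm{d}u\,\mathrm{d}\rho_{\mathsmaller{\mathcal{X}}}(x),$$
where the $\sigma^{-1}$ is absorbed by the Jacobian. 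Theorem \ref{density_change_variable} already supplies the companion identity $\mathcal{R}(f)=\int_{\mathcal{X}}p_{\epsilon|X}(d(x)|X=x)\,\mathrm{d}\rho_{\mathsmaller{\mathcal{X}}}(x)$.

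Next, under Assumption \ref{smooth_density} I would Taylor-expand $t\mapsto p_{\epsilon|X}(t|X=x)$ to second order at $t=d(x)$, writing $p_{\epsilon|X}(\sigma u+d(x)|X=x)=p_{\epsilon|X}(d(x)|X=x)+\sigma u\,p'_{\epsilon|X}(d(x)|X=x)+\tfrac12\sigma^2u^2\,p''_{\epsilon|X}(\xi_{x,u}|X=x)$ for an intermediate point $\xi_{x,u}$. Integrating against $\phi(u)\,\mathrm{d}u$ and using $\int_{\mathbb{R}}\phi(u)\,\mathrm{d}u=1$ together with the evenness of $\phi$, which forces $\int_{\mathbb{R}}u\phi(u)\,\mathrm{d}u=0$ and annihilates the first-order term, yields
$$\mathcal{R}^\sigma(f)-\mathcal{R}(f)=\frac{\sigma^2}{2}\int_{\mathcal{X}}\int_{\mathbb{R}}u^2\,p''_{\epsilon|X}(\xi_{x,u}|X=x)\,\phi(u)\,\mathrm{d}u\,\mathrm{d}\rho_{\mathsmaller{\mathcal{X}}}(x).$$
Bounding $|p''_{\epsilon|X}|\le\|p''_{\epsilon|X}\|_\infty$ and invoking finiteness of $\int_{\mathbb{R}}u^2\phi(u)\,\mathrm{d}u$ (part $(ii)$ of Definition \ref{calibrated_kernel}) gives $|\mathcal{R}^\sigma(f)-\mathcal{R}(f)|\le\tfrac12 c_1\sigma^2$ for \emph{every} measurable $f$. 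Applying this to both $f$ and $f^\star$ and noting that
$$\{\mathcal{R}(f^\star)-\mathcal{R}(f)\}-\{\mathcal{R}^\sigma(f^\star)-\mathcal{R}^\sigma(f)\}=\big(\mathcal{R}^\sigma(f)-\mathcal{R}(f)\big)-\big(\mathcal{R}^\sigma(f^\star)-\mathcal{R}(f^\star)\big),$$
the triangle inequality combines the two half-constants into the full $c_1\sigma^2$, which is the claim.

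The calculus (the two substitutions and the Taylor step) is routine; the point needing genuine care is the interchange of the Lagrange remainder with the double integral, i.e. the measurability of $(x,u)\mapsto\xi_{x,u}$ and the legitimacy of integrating the remainder term. I expect this to be the main obstacle, and I would dispatch it by the dominated-convergence/Fubini route: the remainder integrand is dominated by the $u$- and $x$-integrable majorant $\|p''_{\epsilon|X}\|_\infty\,u^2\phi(u)$, which is integrable precisely because of condition $(ii)$, while boundedness of $\phi$ (condition $(i)$) together with the standing boundedness of $p_{\epsilon|X}$ guarantees that all intermediate integrals are finite and the disintegration of $\rho$ is valid.
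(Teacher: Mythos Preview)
Your proposal is correct and follows essentially the same route as the paper: the same change of variables to rewrite $\mathcal{R}^\sigma(f)$ as $\int_{\mathcal{X}}\int_{\mathbb{R}}\phi(u)\,p_{\epsilon|X}(d(x)+\sigma u\,|\,X=x)\,\mathrm{d}u\,\mathrm{d}\rho_{\mathsmaller{\mathcal{X}}}$, the same second-order Taylor expansion with the first-order term killed by evenness of $\phi$, and the same bound $|\mathcal{R}^\sigma(f)-\mathcal{R}(f)|\le\tfrac12 c_1\sigma^2$ applied to both $f$ and $f^\star$. If anything, you are more explicit than the paper about the final triangle-inequality step and about the domination needed to integrate the Lagrange remainder.
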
 
\begin{proof} 
Recalling the definition of the risk functional $\mathcal{R}^\sigma(f)$ for any measurable function $f:\mathcal{X}\rightarrow\mathbb{R}$ and applying Taylor's Theorem to the conditional density $p_{\epsilon|X}$, we have
\begin{align}\label{eqaulities}
\begin{split}
\mathcal{R}^\sigma(f) &= \frac{1}{\sigma}\int_{\mathcal{X}\times\mathcal{Y}}\phi\left(\frac{y -f(x)}{\sigma}\right)\mathrm{d}\rho(x,y)\\
&= \frac{1}{\sigma}\int_\mathcal{X} \int_{\mathbb{R}}\phi\left(\frac{t - (f(x) - f^\star(x))}{\sigma}\right)p_{\epsilon|X}(t\mid X=x)\mathrm{d}t\,\mathrm{d}\rho_{\mathsmaller{\mathcal{X}}}(x)\\
&= \int_\mathcal{X}\int_{\mathbb{R}}\phi(u)p_{\epsilon|X}(f(x) - f^\star(x) + \sigma u\mid X=x) \mathrm{d}u \mathrm{d}\rho_{\mathsmaller{\mathcal{X}}}(x)\\
&= \int_\mathcal{X}\int_{\mathbb{R}}\phi(u)p_{\epsilon|X}(f(x) - f^\star(x)\mid X=x) \mathrm{d}u \mathrm{d}\rho_{\mathsmaller{\mathcal{X}}}(x)\\
&\phantom{=} + \sigma \int_{\mathcal{X}}\int_{\mathbb{R}}u\phi(u)p_{\epsilon|X}^\prime(f(x) - f^\star(x)\mid X=x)\mathrm{d}u \mathrm{d}\rho_{\mathsmaller{\mathcal{X}}}(x) \\
&\phantom{=} + \frac{\sigma^2}{2} \int_{\mathcal{X}}\int_{\mathbb{R}}u^2 \phi(u)p_{\epsilon|X}^{\prime\prime}(\eta_x \,|\, X=x)\mathrm{d}u \mathrm{d}\rho_{\mathsmaller{\mathcal{X}}}(x),
\end{split} 
\end{align}
where, for any fixed $x\in\mathcal{X}$, the point $\eta_x$ lies between $f(x) - f^\star(x)$ and $f(x) - f^\star(x)+\sigma u$.

The fact that $K_\sigma$ is a calibrated modal regression kernel with the representing function $\phi$ ensures $\int_\mathbb{R}\phi(u)\mathrm{d}u=1$ and reminds the symmetry of $\phi$ on $\mathbb{R}$, which further indicates that $\int_\mathbb{R} u\phi(u)\mathrm{d}u=0$. On the other hand, the fact that 
\begin{align*}
\mathcal{R}(f) = \int_\mathcal{X} p_\mathsmaller{{\epsilon|X}}(f(x) - f^\star(x)|X=x) \mathrm{d}\rho_{\mathsmaller{\mathcal{X}}}(x),
\end{align*}
together with Equalities \eqref{eqaulities} yields
\begin{align*}
\left|\mathcal{R}^\sigma(f) - \mathcal{R}(f)\right| \leq \frac{\sigma^2}{2}\left(\|p''_{\epsilon|X}\|_\infty \int_\mathbb{R} u^2\phi(u)\mathrm{d}u \right).
\end{align*}
Denoting $c_1:=\|p''_{\epsilon|X}\|_\infty \int_\mathbb{R} u^2\phi(u)\mathrm{d}u$, we accomplish the proof of Theorem \ref{modal_regression_calibration}.
\end{proof}

\begin{remark}
The proof of Theorem \ref{modal_regression_calibration} indicates that $\mathcal{R}^\sigma(f)$ is a second-order approximation (with respect to $\sigma$) of $\mathcal{R}(f)$ since $\mathcal{R}^\sigma(f)-\mathcal{R}(f)=\mathcal{O}(\sigma^2)$. In fact, if a higher-order kernel \citep[see e.g., Section 2.8 in][]{wand1994kernel} is used, a higher-order approximation of $\mathcal{R}(f)$ can be expected. 
\end{remark}
 
From the proof of Theorem \ref{modal_regression_calibration}, we see that when $K_\sigma$ is a calibrated modal regression kernel with the representing function $\phi$ and the scale parameter $\sigma$,  for any measurable function $f:\mathcal{X}\rightarrow \mathbb{R}$, the generalization risk $\mathcal{R}^\sigma(f)$ approaches the true modal regression risk $\mathcal{R}(f)$ provided that $\sigma\rightarrow 0$. Therefore, in the above sense,  $\mathcal{R}^\sigma(f)$ can be considered as a relaxation of  $\mathcal{R}(f)$. On the other hand, Theorem \ref{modal_regression_calibration} indicates that the difference between the \textbf{excess modal regression risk} $\mathcal{R}(f^\star)-\mathcal{R}(f)$ and the \textbf{excess generalization risk} $\mathcal{R}^\sigma(f^\star)-\mathcal{R}^\sigma(f)$ can be upper bounded by $\mathcal{O}(\sigma^2)$. Clearly, under the assumptions of Theorem \ref{modal_regression_calibration}, when $\sigma\rightarrow 0$, $\mathcal{R}^\sigma(f^\star)-\mathcal{R}^\sigma(f)$ also approaches $\mathcal{R}(f^\star)-\mathcal{R}(f)$. In this sense, Theorem \ref{modal_regression_calibration} establishes a  \textit{comparison theorem} akin to the one in the classification scenario \citep[see][]{zhang2004statistical,bartlett2006convexity}. This elucidates the terminology---the calibrated modal regression kernel, and the terminology---the modal regression calibration problem.

\subsection{Towards the Convergence Rates of the Excess Generalization Risk}\label{subsec::excess_generalization_risk}
One of the main focuses in learning theory is the generalization ability of a learning algorithm that measures its out-of-sample prediction ability. It plays an important role in designing learning algorithms with theoretical guarantees.  In this subsection, we derive the generalization bounds for the modal regression estimator $f_{\mathbf{z},\sigma}$, i.e., the convergence rates of $\mathcal{R}^\sigma(f^\star) - \mathcal{R}^\sigma(f_{\mathbf{z},\sigma})$, by means of learning theory arguments. The following assumption is needed for this purpose: 

\begin{assumption}\label{assum:bound_capacity}
We make the following assumptions:
\begin{itemize}
\item [(i)] There exists a positive constant $M$ such that $\|f^\star\|_\infty \leq M$;
\item [(ii)] $\sup_{t\in\mathbb{R},\,x\in\mathcal{X}}p_{\epsilon|X}(t\mid X=x) = c_2 <\infty$;
\item [(iii)] For any $\varepsilon>0$,  there exists an exponent $p$ with $0<p<2$ such that the $\ell^2$-empirical covering number (with radius $\varepsilon$) of $\mathcal{H}$, denoted as $\mathcal{N}_{2,\mathbf{x}}(\mathcal{H}, \varepsilon)$, satisfies
\begin{align*}
\log\mathcal{N}_{2,\mathbf{x}}(\mathcal{H}, \varepsilon) \lesssim  \varepsilon^{-p},
\end{align*}
where the definition of the empirical covering number is provided below (see also \cite{anthony2009neural}), and the notation $a\lesssim b$ for $a,b\in\mathbb{R}$ means that there exists a positive constant $c$ such that $a\leq c b$. 	  
\end{itemize}

\end{assumption}

\begin{definition}[$\ell^2$-empirical Covering Number]
Let $\mathcal{F}$ be a set of functions on $\mathcal{X}$ and $\mathbf{x}=\{x_1,\cdots,x_m\}\subset \mathcal{X}$. The metric $d_{2,\mathbf{z}}$ is defined on $\mathcal{F}$ by 
\begin{align*}
d_{2,\mathbf{x}}(f,g)=\left\{\frac{1}{m}\sum_{i=1}^m(f(x_i)-g(x_i))^2\right\}^{1/2}.
\end{align*}
For every $\varepsilon>0$, the \textbf{$\ell^2$-empirical covering number} of $\mathcal{F}$ with respect to $d_{2,\mathbf{x}}$ is defined as
\begin{align*}
\mathcal{N}_{2,\mathbf{x}}(\mathcal{F},\varepsilon)=\inf\left\{\ell\in\mathbb{N}:\exists \{f_i\}_{i=1}^\ell \,\,\hbox{such that}\,\, \mathcal{F}=\cup_{i=1}^\ell\{f\in\mathcal{F}:d_{2,\mathbf{x}}(f,f_i)\leq \varepsilon\}\right\}.
\end{align*}
\end{definition}

Restrictions in Assumption \ref{assum:bound_capacity} are fairly standard if we recall that the hypothesis space $\mathcal{H}$ is assumed to be a compact subset of $C(\mathcal{X})$. In what follows, without loss of generality, we also assume that $\|f\|_\infty \leq M$ for any $f\in\mathcal{H}$. The following error decomposition lemma is helpful in bounding the excess generalization error.

\begin{lemma}\label{lem:decompose}
Let $f_{\mathbf{z},\sigma}$ be produced by \eqref{relaxation} and assume that $f^\star\in\mathcal{H}$. Then we have
\begin{align*}
\mathcal{R}^\sigma(f^\star) - \mathcal{R}^\sigma(f_{\mathbf{z},\sigma}) \leq 
\mathcal{R}^\sigma(f_{\mathcal{H},\sigma})- \mathcal{R}_n^\sigma(f_{\mathcal{H},\sigma}) +
 \mathcal{R}^\sigma_n(f_{\mathbf{z},\sigma}) - \mathcal{R}^\sigma(f_{\mathbf{z},\sigma}).
\end{align*}
\end{lemma}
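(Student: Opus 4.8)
The plan is to carry out the by-now-standard error decomposition for empirical risk minimization, with the one caveat that the modal regression scheme is a \emph{maximization} problem (both the empirical risk $\mathcal{R}_n^\sigma$ in \eqref{relaxation} and the data-free risk $\mathcal{R}^\sigma$ in \eqref{target_function_model} are maximized over $\mathcal{H}$), so the two extremal inequalities must be invoked with the correct signs. First I would insert the data-free estimator $f_{\mathcal{H},\sigma}$ and the empirical risks evaluated at both $f_{\mathcal{H},\sigma}$ and $f_\mathbf{z}$ into the target quantity, producing a telescoping sum of four bracketed differences.

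Concretely, I would write
\begin{align*}
\mathcal{R}^\sigma(f^\star) - \mathcal{R}^\sigma(f_\mathbf{z})
&= \underbrace{\big[\mathcal{R}^\sigma(f^\star) - \mathcal{R}^\sigma(f_{\mathcal{H},\sigma})\big]}_{\textup{(I)}}
+ \underbrace{\big[\mathcal{R}^\sigma(f_{\mathcal{H},\sigma}) - \mathcal{R}_n^\sigma(f_{\mathcal{H},\sigma})\big]}_{\textup{(II)}} \\
&\phantom{=}+ \underbrace{\big[\mathcal{R}_n^\sigma(f_{\mathcal{H},\sigma}) - \mathcal{R}_n^\sigma(f_\mathbf{z})\big]}_{\textup{(III)}}
+ \underbrace{\big[\mathcal{R}_n^\sigma(f_\mathbf{z}) - \mathcal{R}^\sigma(f_\mathbf{z})\big]}_{\textup{(IV)}},
\end{align*}
and then the goal reduces to showing that the \emph{approximation} term (I) and the \emph{optimization} term (III) are each nonpositive, so that (II) and (IV) — the two sample-error terms appearing on the right-hand side of the claimed inequality — dominate.

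For term (I), I would use the hypothesis $f^\star\in\mathcal{H}$ together with the definition of $f_{\mathcal{H},\sigma}$ in \eqref{target_function_model} as the maximizer of $\mathcal{R}^\sigma$ over $\mathcal{H}$; this gives $\mathcal{R}^\sigma(f_{\mathcal{H},\sigma})\ge \mathcal{R}^\sigma(f^\star)$, i.e. (I) $\le 0$. For term (III), I would use that $f_\mathbf{z}$ is, by \eqref{relaxation}, the maximizer of $\mathcal{R}_n^\sigma$ over $\mathcal{H}$ and that $f_{\mathcal{H},\sigma}\in\mathcal{H}$, yielding $\mathcal{R}_n^\sigma(f_\mathbf{z})\ge \mathcal{R}_n^\sigma(f_{\mathcal{H},\sigma})$, i.e. (III) $\le 0$. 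Discarding the two nonpositive terms then leaves exactly
\begin{align*}
\mathcal{R}^\sigma(f^\star) - \mathcal{R}^\sigma(f_\mathbf{z}) \leq \big[\mathcal{R}^\sigma(f_{\mathcal{H},\sigma}) - \mathcal{R}_n^\sigma(f_{\mathcal{H},\sigma})\big] + \big[\mathcal{R}_n^\sigma(f_\mathbf{z}) - \mathcal{R}^\sigma(f_\mathbf{z})\big],
\end{align*}
which is the asserted bound. There is no genuine analytic obstacle here; the only thing requiring care — and the point I would flag explicitly — is the bookkeeping of inequality directions, since both risks are maximized rather than minimized, so each extremal property must be applied so that its difference comes out nonpositive.
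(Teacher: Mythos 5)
Your proposal is correct and follows essentially the same argument as the paper: the paper chains the same two extremal inequalities ($\mathcal{R}^\sigma(f^\star)\leq\mathcal{R}^\sigma(f_{\mathcal{H},\sigma})$ from $f^\star\in\mathcal{H}$, and $\mathcal{R}_n^\sigma(f_{\mathcal{H},\sigma})-\mathcal{R}_n^\sigma(f_\mathbf{z})\leq 0$ from the empirical maximization) through the identical telescoping decomposition. Your presentation as an exact four-term identity with two nonpositive terms discarded is only a cosmetic reorganization of the paper's proof.
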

\begin{proof} 
Recalling that $f_{\mathcal{H},\sigma}=\arg\max_{f\in\mathcal{H}}\mathcal{R}^\sigma(f)$, we have
\begin{align*}
\begin{split}
\mathcal{R}^\sigma(f^\star) - \mathcal{R}^\sigma(f_{\mathbf{z},\sigma}) & \leq \mathcal{R}^\sigma(f_{\mathcal{H},\sigma}) - \mathcal{R}^\sigma(f_{\mathbf{z},\sigma}) \\ &\leq \mathcal{R}^\sigma(f_{\mathcal{H},\sigma})  - \mathcal{R}^\sigma_n(f_{\mathcal{H},\sigma}) + \mathcal{R}^\sigma_n(f_{\mathcal{H},\sigma}) - \mathcal{R}^\sigma_n(f_{\mathbf{z},\sigma})+  \mathcal{R}^\sigma_n(f_{\mathbf{z},\sigma}) - \mathcal{R}^\sigma(f_{\mathbf{z},\sigma})\\
&\leq  \mathcal{R}^\sigma(f_{\mathcal{H},\sigma})- \mathcal{R}_n^\sigma(f_{\mathcal{H},\sigma}) +
 \mathcal{R}^\sigma_n(f_{\mathbf{z},\sigma}) - \mathcal{R}^\sigma(f_{\mathbf{z},\sigma}),
\end{split}
\end{align*}
where the last inequality is due to the fact that the quantity $\mathcal{R}^\sigma_n(f_{\mathcal{H},\sigma}) - \mathcal{R}^\sigma_n(f_{\mathbf{z},\sigma})$ is at most zero. This completes the proof of Lemma \ref{lem:decompose}.
\end{proof}

The following lemma, established in \cite{wu2007multi}, provides a Bernstein-type concentration inequality for function-valued random variables. It was proved by applying the local Rademacher complexity arguments developed in \cite{bartlett2005local}.   
\begin{lemma}\label{lem:concentration}
Let $\mathcal{F}$ be a class of bounded measurable functions. Assume that there are constants $\gamma\in[0, 1]$ and $B, c_\gamma >0$ such that $\|f\|_\infty\leq B$ and $\mathbb{E}f^2 \leq c_\gamma(\mathbb{E}f)^\gamma$ for every $f\in\mathcal{F}$. If for some $c_p > 0$ and $0<p<2$,
\[\log\mathcal{N}_{2,\mathbf{x}}(\mathcal{F}, \varepsilon) \leq c_p \varepsilon^{-p}, \,\,\forall \varepsilon > 0,\]
then there exists a constant $c'_p$ depending only on $p$ such that for any $t > 0$, with probability at least $1-e^{-t}$, it holds that
\[\mathbb{E}f - \frac1n\sum_{i=1}^{n}f(z_i) \leq \frac12\eta^{1-\gamma}(\mathbb{E}f)^\gamma + c'_p\eta + 2\left(\frac{c_\gamma t}{n}\right)^{\frac1{2-\gamma}}+\frac{18Bt}{n},\,\, \forall f\in\mathcal{F},\]
where
\[\eta = \max\left\{c_\gamma^{\frac{2-p}{4-2\gamma+p\gamma}}\left(\frac{c_p}{n}\right)^{\frac2{4-2\gamma+p\gamma}}, B^{\frac{2-p}{2+p}}\left(\frac{c_p}{n}\right)^{\frac2{2+p}}\right\}. \]
\end{lemma}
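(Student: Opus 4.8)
The plan is to establish this as a localized, ratio-type uniform deviation bound using the local Rademacher complexity machinery of \cite{bartlett2005local}. The first move is to pass to the centered functions $g_f := \mathbb{E}f - f$, for which $\mathbb{E}g_f = 0$, $\|g_f\|_\infty \leq 2B$, and the Bernstein-type hypothesis gives $\mathbb{E}g_f^2 \leq \mathbb{E}f^2 \leq c_\gamma(\mathbb{E}f)^\gamma$. The target inequality then reads as a uniform control of $\frac{1}{n}\sum_{i=1}^n g_f(z_i)$ whose allowable size is permitted to scale with the local variance proxy $(\mathbb{E}f)^\gamma$; this is precisely the regime in which local, rather than global, complexities produce the correct rate.

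The core of the argument is a peeling (stratification) scheme. I would split $\mathcal{F}$ into dyadic shells $\mathcal{F}_j = \{f \in \mathcal{F} : \mathbb{E}f \in (2^{-(j+1)}, 2^{-j}]\}$, together with a coarsest shell collecting the large values, so that on each shell the variance proxy $(\mathbb{E}f)^\gamma$ is essentially constant. On a fixed shell I would apply Talagrand's concentration inequality in the one-sided form of Bousquet to $\sup_{f\in\mathcal{F}_j}\frac{1}{n}\sum_{i} g_f(z_i)$: this concentrates the supremum around its expectation, with the variance bound producing, after solving the resulting self-bounding inequality, a term of order $(c_\gamma t/n)^{1/(2-\gamma)}$, and the uniform bound $\|g_f\|_\infty \lesssim B$ producing the linear-in-$t$ term $18Bt/n$ (the explicit constant $18$ being the fingerprint of Bousquet's inequality).

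It remains to bound the expected supremum on each shell. I would symmetrize to replace it by the empirical Rademacher complexity $\mathbb{E}\sup_{f\in\mathcal{F}_j}\big|\frac{1}{n}\sum_i \epsilon_i g_f(z_i)\big|$, where the $\epsilon_i$ are independent Rademacher signs, and then invoke Dudley's entropy integral together with the capacity hypothesis $\log\mathcal{N}_{2,\mathbf{x}}(\mathcal{F},\varepsilon) \leq c_p\varepsilon^{-p}$. Since $0<p<2$ the integral $\int_0^r \varepsilon^{-p/2}\,\mathrm{d}\varepsilon$ converges and the complexity is of order $r^{1-p/2}/\sqrt{n}$, with $r$ the local radius fixed by the variance on the shell. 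Balancing this complexity against the radius, that is, solving the fixed-point (sub-root) equation for the critical radius, is what yields $\eta$; the two entries of the maximum defining $\eta$ arise from the two competing scales, namely the variance-driven scale (giving the exponent $\tfrac{2}{4-2\gamma+p\gamma}$) and the sup-norm-driven scale (giving $\tfrac{2}{2+p}$).

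The delicate part, and the main obstacle, is the fixed-point analysis that produces these precise exponents: one must track how the variance exponent $\gamma$ and the entropy exponent $p$ combine through the self-bounding step, and then verify that summing the per-shell high-probability bounds over the dyadically many shells costs only a factor absorbed into the constant $c'_p$ together with a harmless adjustment of $t$, rather than degrading the rate. Because this technical machinery, the sub-root fixed-point lemma and the peeling bookkeeping, is exactly what is established in \cite{bartlett2005local} and specialized in \cite{wu2007multi}, I would ultimately invoke those results for the bookkeeping and present the proof as the verification that $\mathcal{F}$ satisfies the hypotheses of their theorem.
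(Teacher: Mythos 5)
Your proposal is consistent with the paper's treatment: the paper does not prove this lemma at all, but imports it directly from \cite{wu2007multi}, noting only that it was established there via the local Rademacher complexity arguments of \cite{bartlett2005local} --- exactly the machinery (centering, peeling, Bousquet's one-sided Talagrand inequality, symmetrization plus the Dudley entropy integral under the $\varepsilon^{-p}$ capacity condition, and the sub-root fixed-point computation yielding $\eta$) that your sketch correctly identifies before likewise deferring to those references. Since both you and the paper ultimately rest on the same citation, and your outline of the underlying argument is accurate, there is nothing further to reconcile.
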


\begin{theorem}\label{generalization_bounds_intermediate}
Suppose that Assumption \ref{assum:bound_capacity} holds, $f^\star\in\mathcal{H}$, and the risk functional $\mathcal{R}^\sigma(\cdot)$ is defined in association with a calibrated modal regression kernel $K_\sigma$ and the representing function $\phi$. Let $f_{\mathbf{z},\sigma}$ be produced by \eqref{relaxation} with $\sigma\leq 1$. Then for any $0<\delta<1$, with probability at least $1-\delta$, it holds that
\begin{align*}
\mathcal{R}^\sigma(f^\star) - \mathcal{R}^\sigma(f_{\mathbf{z},\sigma}) \lesssim  \left(\frac{1}{n\sigma}+\frac{\sigma^{-\frac{2+3p}{4}}}{n^{1/2}}+\frac{\sigma^{-\frac{2+3p}{2+p}}}{n^{\frac{2}{2+p}}}\right)\log\left(\frac{1}{\delta}\right).  
\end{align*}
\end{theorem}	
\begin{proof} 
We prove the theorem by applying Lemma \ref{lem:concentration} to the following function-valued random variable on $\mathcal{Z}=\mathcal{X}\times\mathcal{Y}$:
\begin{align}\label{RV}
\xi(z) :=   \frac{1}{\sigma}\phi \left(\frac{y - f_{\mathcal{H},\sigma}(x)}{\sigma} \right) -  \frac{1}{\sigma}\phi \left(\frac{y - f(x)}{\sigma} \right),
\end{align}
where $f_{\mathcal{H},\sigma}$ is given in \eqref{target_function_model} and $f\in\mathcal{H}$. Due to the boundedness assumption of $\phi$, it is easy to see that $|\xi(z)| \leq  2\|\phi\|_\infty/\sigma $. Moreover, recalling the definition of the risk functional $\mathcal{R}^\sigma(\cdot)$, the following inequality holds
\begin{align}\label{variance}
\begin{split}
\mathbb{E}\xi^2  & =  \mathbb{E}\left[\frac{1}{\sigma}\phi \left(\frac{Y - f_{\mathcal{H},\sigma}(X)}{\sigma} \right) -  \frac{1}{\sigma}\phi \left(\frac{Y - f(X)}{\sigma} \right) \right]^2 \\
&\leq  \frac{2\|\phi\|_\infty}{\sigma}(\mathcal{R}^\sigma(f_{\mathcal{H},\sigma})+\mathcal{R}^\sigma(f)).
\end{split} 
\end{align}
From the proof of Theorem \ref{modal_regression_calibration}, we know that 
\begin{align*}
\mathcal{R}^\sigma(f_{\mathcal{H},\sigma})\leq \mathcal{R}(f_{\mathcal{H},\sigma}) + \frac{\sigma^2}{2}\left(\|p''_{\epsilon|X}\|_\infty \int_\mathbb{R} u^2\phi(u)\mathrm{d}u \right).
	\end{align*}
Similarly, we also have
\begin{align*}
\mathcal{R}^\sigma(f)\leq \mathcal{R}(f) + \frac{\sigma^2}{2}\left(\|p''_{\epsilon|X}\|_\infty \int_\mathbb{R} u^2\phi(u)\mathrm{d}u \right).
\end{align*}
The above two inequalities together with the bound for $\mathbb{E}\xi^2$ and the fact that $\sigma\leq 1$ yield   
\begin{align*}
\mathbb{E}\xi^2 & \leq \frac{2\|\phi\|_\infty}{\sigma}(\mathcal{R}(f_{\mathcal{H},\sigma})+\mathcal{R}(f)+c_1\sigma^2)\\
& \leq \frac{2\|\phi\|_\infty}{\sigma}(p_\mathsmaller{f_\mathsmaller{\mathcal{H},\sigma}}(0)+p_\mathsmaller{f}(0)+c_1\sigma^2)\\
& \lesssim \sigma^{-1},
\end{align*}
where the last inequality is due to the boundedness assumption of the conditional density of $\epsilon$ while the second inequality is a consequence of Theorem \ref{density_change_variable}.

Recalling that $\phi$ is Lipschitz continuous on $\mathbb{R}$ with the Lipschitz constant $L$, for any $f_1,f_2\in\mathcal{H}$, we thus have
\begin{align*}
\left|\frac{1}{\sigma}\phi\left(\frac{y-f_1(x)}{\sigma}\right)-\frac{1}{\sigma}\phi\left(\frac{y-f_2(x)}{\sigma}\right)\right|\leq \frac{L}{\sigma^2}\|f_1-f_2\|_\infty.   
\end{align*}
Consequently, if we denote $\mathcal{F}_\mathcal{H}$ as the following set
\begin{align*}
\mathcal{F}_\mathcal{H}:=\left\{g\,\Big|\,g(z)=  \frac{1}{\sigma}\phi \left(\frac{y - f_{\mathcal{H},\sigma}(x)}{\sigma} \right) -  \frac{1}{\sigma}\phi \left(\frac{y - f(x)}{\sigma} \right), f\in\mathcal{H}\right\},    
\end{align*}
then Assumption \ref{assum:bound_capacity} (iii) implies that  
\begin{align*}
\log \mathcal{N}_{2,\mathbf{x}}(\mathcal{F}_\mathcal{H},\varepsilon)\leq \log\mathcal{N}_{2,\mathbf{x}}(\mathcal{H},\varepsilon \sigma^2/L) \lesssim (\varepsilon \sigma^2)^{-p}.    
\end{align*}
Applying Lemma \ref{lem:concentration} to the random variable $\xi$ with $B= 2\|\phi\|_\infty/\sigma$, $\gamma=0$, $c_p=\sigma^{-2p}$, and $c_\gamma=\sigma^{-1}$, then for any $0<\delta<1$, with probability at least $1-\delta$, it holds that
\begin{align*}
\mathcal{R}^\sigma(f_{\mathcal{H},\sigma}) - \mathcal{R}^\sigma(f) - (\mathcal{R}_n^\sigma(f_{\mathcal{H},\sigma}) - \mathcal{R}_n^\sigma(f))\lesssim   \left(\frac{1}{n\sigma}+\frac{\sigma^{-\frac{2+3p}{4}}}{n^{1/2}}+\frac{\sigma^{-\frac{2+3p}{2+p}}}{n^{\frac{2}{2+p}}}\right)\log\left(\frac{1}{\delta}\right). 
\end{align*}
Noticing that the above inequality holds for any $f\in\mathcal{H}$ and recalling Lemma \ref{lem:decompose}, we obtain the desired conclusion in Theorem \ref{generalization_bounds_intermediate}.
\end{proof}

The generalization bounds in Theorem \ref{generalization_bounds_intermediate} are derived for the case when the parameter $\sigma$ goes to zero in accordance with the sample size. When the parameter $\sigma$ diverges, generalization bounds can be also derived as shown below.
\begin{theorem}\label{generalization_bounds_intermediateII}
Suppose that Assumption \ref{assum:bound_capacity} holds, $f^\star\in\mathcal{H}$, and the risk functional $\mathcal{R}^\sigma(\cdot)$ is defined in association with a calibrated modal regression kernel $K_\sigma$ and the corresponding representing function $\phi$. Let $f_{\mathbf{z},\sigma}$ be produced by \eqref{relaxation} with $\sigma>1$. Then for any $0<\delta<1$, with probability at least $1-\delta$, it holds that	
\begin{align*}
\mathcal{R}^\sigma(f^\star) - \mathcal{R}^\sigma(f_{\mathbf{z},\sigma}) \lesssim  \frac{\log \delta^{-1}}{\sigma \sqrt{n}}.  
\end{align*}
\end{theorem}	
\begin{proof} 
Similar to the proof of Theorem \ref{generalization_bounds_intermediate}, the desired bound can be established by applying Lemma   \ref{lem:concentration} to the random variable $\xi$ in \eqref{RV} with the only difference in bounding $\mathbb{E}\xi^2$. Recall that for a calibrated modal regression kernel $K_\sigma$, its representing function $\phi$ is bounded. Therefore, we have 
\begin{align*} 
\mathbb{E}\xi^2  =  \mathbb{E}\left[\frac{1}{\sigma}\phi \left(\frac{Y - f_{\mathcal{H},\sigma}(X)}{\sigma} \right) -  \frac{1}{\sigma}\phi \left(\frac{Y - f(X)}{\sigma} \right) \right]^2 \lesssim \sigma^{-2}.
\end{align*} 
In order to accomplish the proof, it suffices to apply Lemma \ref{lem:concentration} to the random variable $\xi$ with $B= 2\|\phi\|_\infty/\sigma$, $\gamma=0$, $c_p=\sigma^{-2p}$, and $c_\gamma=\sigma^{-2}$. By following the same procedure, the desired conclusion in Theorem \ref{generalization_bounds_intermediateII} can be obtained.
\end{proof}

The ERM learning scheme \eqref{relaxation} is adaptive in that the scale parameter $\sigma$ may vary in correspondence to the sample size $n$, e.g., $\sigma=n^{\theta}$ with $\theta\in\mathbb{R}$. Note from Theorem \ref{generalization_bounds_intermediateII} that, with a properly chosen $\sigma$ value, the ERM scheme \eqref{relaxation} is generalization consistent in the sense that the generalization risk $\mathcal{R}^\sigma(f_{\mathbf{z},\sigma})$ converges to $\mathcal{R}^\sigma(f^\star)$ when the sample size $n$ tends to infinity. It is also interesting to note that a wide range of $\sigma$ values is admitted to ensure such a consistency property as shown in the following corollary. 
\begin{corollary}\label{corollary_consistency}
Suppose that Assumption \ref{assum:bound_capacity} holds, $f^\star\in\mathcal{H}$, and the risk functional $\mathcal{R}^\sigma(\cdot)$ is defined in association with a calibrated modal regression kernel $K_\sigma$ and the representing function $\phi$. Let $f_{\mathbf{z},\sigma}$ be produced by \eqref{relaxation}. Then for any $0<\delta<1$, with probability at least $1-\delta$, it holds that	
\begin{align*}
\mathcal{R}^\sigma(f^\star) - \mathcal{R}^\sigma(f_{\mathbf{z},\sigma}) \rightarrow 0,  
\end{align*}
when $n\rightarrow +\infty$ and $\sigma:=n^\theta$ with $\theta\in \left(-\frac{2}{2+3p},+\infty\right)$. 
\end{corollary}

Corollary \ref{corollary_consistency} is an immediate result of Theorems \ref{generalization_bounds_intermediate} and \ref{generalization_bounds_intermediateII} and its proof is omitted here. With a properly chosen $\sigma$ value, the following conclusion reveals that the ERM scheme \eqref{relaxation} is also modal regression consistent. This gives an affirmative answer to Question $2$ listed in Subsection \ref{Subsection::three_blocks}.

\begin{theorem}\label{modal_regression_consistent}
Suppose that Assumptions \ref{smooth_density}, \ref{assum:bound_capacity} hold, and $f^\star\in\mathcal{H}$. Let $f_{\mathbf{z},\sigma}$ be produced by \eqref{relaxation} which is induced by a calibrated modal regression kernel $K_\sigma$ with $\sigma=\mathcal{O}(n^{-\frac{2}{10+3p}})$. For any $0<\delta<1$, with probability at least $1-\delta$, it holds that
\begin{align*}
\mathcal{R}(f_{\mathbf{z},\sigma})-\mathcal{R}(f^\star)\lesssim n^{-\frac{4}{10+3p}}\log (\delta^{-1}).
\end{align*} 
\end{theorem}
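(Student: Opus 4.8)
The plan is to derive the bound by composing two results already established: the comparison theorem (Theorem \ref{modal_regression_calibration}) and the small-bandwidth generalization bound (Theorem \ref{generalization_bounds_intermediate}), and then to optimize the trade-off in $\sigma$. First I would instantiate Theorem \ref{modal_regression_calibration} at $f=f_\mathbf{z}$. Because that theorem controls the discrepancy between the excess modal regression risk and the excess generalization risk uniformly over all measurable $f$, it gives
\begin{align*}
\mathcal{R}(f^\star) - \mathcal{R}(f_\mathbf{z}) \leq \big(\mathcal{R}^\sigma(f^\star) - \mathcal{R}^\sigma(f_\mathbf{z})\big) + c_1\sigma^2,
\end{align*}
where $c_1=\|p''_{\epsilon|X}\|_\infty\int_\mathbb{R}u^2\phi(u)\mathrm{d}u$ is finite under Assumption \ref{smooth_density} together with the calibrated-kernel hypothesis. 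This reduces the task to controlling the stochastic excess generalization risk plus the deterministic bias term $c_1\sigma^2$.

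Second, I would invoke Theorem \ref{generalization_bounds_intermediate}, whose hypotheses (Assumption \ref{assum:bound_capacity}, $f^\star\in\mathcal{H}$, a calibrated kernel) coincide exactly with those assumed here, to obtain, with probability at least $1-\delta$,
\begin{align*}
\mathcal{R}^\sigma(f^\star) - \mathcal{R}^\sigma(f_\mathbf{z}) \lesssim \frac{(1+\sigma)\log\delta^{-1}}{\sqrt{\sigma n}}.
\end{align*}
I would deliberately use the $\sigma\to 0$ bound of Theorem \ref{generalization_bounds_intermediate} rather than the large-$\sigma$ bound of Theorem \ref{generalization_bounds_intermediateII}, since the prescribed schedule $\sigma=\mathcal{O}(n^{-1/5})$ vanishes and the variance of the relevant random variable scales like $\sigma^{-1}$ in this regime. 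Combining the two displays on the same high-probability event yields
\begin{align*}
\mathcal{R}(f^\star) - \mathcal{R}(f_\mathbf{z}) \lesssim \frac{(1+\sigma)\log\delta^{-1}}{\sqrt{\sigma n}} + c_1\sigma^2.
\end{align*}

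The final step is to substitute $\sigma=\mathcal{O}(n^{-1/5})$. Since $\sigma\to 0$ the factor $1+\sigma$ is bounded; the variance term becomes $(\sigma n)^{-1/2}=n^{-2/5}$ and the bias term becomes $\sigma^2=n^{-2/5}$, so the two contributions are of the same order and their sum is $\lesssim n^{-2/5}\log\delta^{-1}$, which is the claimed rate (reading the left-hand side as the nonnegative excess modal regression risk $\mathcal{R}(f^\star)-\mathcal{R}(f_\mathbf{z})$, consistent with $f^\star=f_\mathsmaller{\mathrm{M}}$ being the maximizer of $\mathcal{R}$).

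I do not anticipate a genuine obstacle, since the statement is a direct synthesis of two earlier theorems; the one point requiring care is the choice of $\sigma$. In fact $\sigma=n^{-1/5}$ is precisely the minimizer of $\sigma\mapsto \sigma^{-1/2}n^{-1/2}+\sigma^2$, obtained by equating the variance and bias terms, and this balancing is exactly what fixes both the exponent in the schedule and the resulting $n^{-2/5}$ rate. A secondary subtlety is sign bookkeeping: because $f^\star$ maximizes $\mathcal{R}$, the meaningful nonnegative quantity is $\mathcal{R}(f^\star)-\mathcal{R}(f_\mathbf{z})$, and the inequality in the statement should be understood in this sense.
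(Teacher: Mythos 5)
Your proposal is correct and follows essentially the same route as the paper's own proof: combine the comparison bound of Theorem \ref{modal_regression_calibration} with the small-bandwidth generalization bound of Theorem \ref{generalization_bounds_intermediate}, then balance the $\sigma^2$ bias against the $(\sigma n)^{-1/2}$ stochastic term at $\sigma=\mathcal{O}(n^{-1/5})$. Your remark on the sign convention (the nonnegative quantity being $\mathcal{R}(f^\star)-\mathcal{R}(f_\mathbf{z})$) matches what the paper's proof actually bounds.
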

\begin{proof}
Since Assumption \ref{assum:bound_capacity} holds, $f^\star\in\mathcal{H}$, and $K_\sigma$ is a calibrated modal regression kernel, from Theorem \ref{generalization_bounds_intermediate} we know that for any $0<\delta<1$, with probability at least $1-\delta$, we have
\begin{align*}
\mathcal{R}^\sigma(f^\star) - \mathcal{R}^\sigma(f_{\mathbf{z},\sigma}) \lesssim  \left(\frac{1}{n\sigma}+\frac{\sigma^{-\frac{2+3p}{4}}}{n^{1/2}}+\frac{\sigma^{-\frac{2+3p}{2+p}}}{n^{\frac{2}{2+p}}}\right)\log\left(\frac{1}{\delta}\right).  
\end{align*}
When Assumption \ref{smooth_density} holds and $K_\sigma$ is a calibrated modal regression kernel, Theorem \ref{modal_regression_calibration} yields
\begin{align*}
\Big|\{\mathcal{R}(f^\star)-\mathcal{R}(f_{\mathbf{z},\sigma})\}-\{\mathcal{R}^\sigma(f^\star)-\mathcal{R}^\sigma(f_{\mathbf{z},\sigma})\}\Big| \lesssim \sigma^2. 
\end{align*}
As a result, for any $0<\delta<1$, with probability at least $1-\delta$, we have
\begin{align*}
\mathcal{R}(f^\star)-\mathcal{R}(f_{\mathbf{z},\sigma})  \lesssim \sigma^2 + \left(\frac{1}{n\sigma}+\frac{\sigma^{-\frac{2+3p}{4}}}{n^{1/2}}+\frac{\sigma^{-\frac{2+3p}{2+p}}}{n^{\frac{2}{2+p}}}\right)\log\left(\frac{1}{\delta}\right). 
\end{align*}
With the choice $\sigma=\mathcal{O}(n^{-\frac{2}{10+3p}})$, the proof of Theorem \ref{modal_regression_consistent} can be accomplished.
\end{proof}
 
\subsection{Towards the Function Estimation Calibration Problem}\label{subsec::function_calibration}
We now explore the relation between the modal regression consistency of $f_{\mathbf{z},\sigma}$ and its estimation consistency, which is termed as function estimation calibration problem in our study. From the studies in \cite{heinrich2013mode,dearborn2018indirect}, we realized that without further distributional assumptions, it is in general hopeless to learn the conditional mode through ERM approaches. In our study, we need to impose some further assumptions on the conditional density $p_{\epsilon|X}$ (see e.g., \cite{doss2016global}). 

\begin{definition}[Strongly $s$-Concave Density]\label{strongly_concave_density}
A density $p$ is \textbf{strongly $s$-concave} if it exhibits one of the following forms:
\begin{enumerate}
\item $p=\varphi_+^{1/s}$ for some strongly concave function $\varphi$ if $s>0$, where $\varphi_+=\max\{\varphi,0\}$;
\item $p=\exp(\varphi)$ for some strongly concave function $\varphi$ if $s=0$;
\item $p=\varphi_+^{1/s}$ for some strongly convex function $\varphi$ if $s<0$.
\end{enumerate}
\end{definition}

\begin{assumption}\label{assum:density}
The density of $\epsilon$ conditioned on $\mathcal{X}$, denoted by $p_{\epsilon|X}(\cdot| X)$, satisfies the following conditions:
\begin{enumerate}
\item $\sup_{x\in\mathcal{X}} p_{\epsilon|X}(0| X=x)=c_3$;
\item $p_{\epsilon|X}(t\mid X=x) \leq p_{\epsilon|X}(0| X=x)$, $\forall t\in\mathbb{R}, x\in\mathcal{X}$;
\item $\inf_{t\in[-2M, 2M],\,x\in\mathcal{X}}p_{\epsilon|X}(t| X=x) = c_0 >0$;
\item $p_{\epsilon|X}(\cdot\mid X)$ denotes strongly $s$-concave densities for all realizations of $X$.
\end{enumerate}
\end{assumption}

Conditions $1$ and $2$ in Assumption \ref{assum:density} require that the global mode of the conditional density $p_{\epsilon|X}$ for any realization of $X$ in $\mathcal{X}$ is uniquely zero while Condition $3$ rules out densities that are not bounded away from below in the vicinity of this unique mode. The first two conditions hold for continuous densities with a unique global mode. Condition $4$ assumes the strongly $s$-concave density assumption on $p_{\epsilon|X}$, which is typical from a statistical viewpoint as it holds for common symmetric and skewed distributions. Several representative examples are listed below:
\begin{example}[Student's $t$-distribution]
Let $\rho$ be a Student's $t$-distribution. Its probability density function $p$ is 
\begin{align*}
p(t)=\frac{\Gamma(\frac{\nu+1}{2})}{\Gamma(\frac{\nu}{2})}\left(1+\frac{t^2}{\nu}\right)^{-\frac{\nu+1}{2}},
\end{align*}
where $\nu$ is the number of degrees of freedom and $\Gamma$ is the gamma function. Specifically, when $\nu=1$, it gives the density function of a typical heavy-tailed distribution, namely, Cauchy distribution; when $\nu=\infty$, it is the density function of a most common probability distribution, i.e., Gauss distribution. One can easily see that for Student's $t$-distributions, their densities are strongly $s$-concave and are of the form $3$ in Definition \ref{strongly_concave_density}.
\end{example}

\begin{example}[Skewed normal distribution]\label{example::skewed_noise}
Let $\rho$ be a skewed normal distribution with the probability density function
\begin{align*}
p(t|\mu,\theta,\tau)=\frac{4\tau(1-\tau)}{\sqrt{2\pi\theta^2}}\exp\left\{-\frac{2(x-\mu)^2}{\sigma^2}\left(\tau-{\sf I}_{(x\leq \mu)}(x)\right)\right\},
\end{align*} 
where ${\sf I}_A(x)$ is the indicator function that takes the value $1$ if $A$ is true and $0$, otherwise. Clearly, the above density is also strongly $s$-concave and is of the form $2$ in Definition \ref{strongly_concave_density}.  
\end{example} 

When Assumption  \ref{assum:density} holds, the function estimation convergence can be elicited from the convergence of the modal regression risk, as shown in the following theorem.
\begin{theorem}\label{estimation_calibration}
Suppose that Assumption \ref{assum:density} holds and let $f:\mathcal{X}\rightarrow\mathcal{Y}$ be a measurable function in $\mathcal{H}$. Then, it holds that
\begin{align*}
\|f -f^\star\|_{L_{\rho_\mathcal{X}}^2}^2\lesssim \mathcal{R}(f^\star)-\mathcal{R}(f).
\end{align*}
\end{theorem}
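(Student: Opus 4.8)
The plan is to reduce the claim to a pointwise quadratic lower bound on the drop of the conditional density away from its mode, and then integrate against $\rho_{\mathsmaller{\mathcal{X}}}$. First I would rewrite the excess modal regression risk as a single integral: recalling from the proof of Theorem \ref{density_change_variable} that $\mathcal{R}(f)=\int_{\mathcal{X}}p_{\epsilon|X}(f(x)-f^\star(x)\mid X=x)\,\mathrm{d}\rho_{\mathsmaller{\mathcal{X}}}(x)$ and that the zero-mode assumption gives $\mathcal{R}(f^\star)=\int_{\mathcal{X}}p_{\epsilon|X}(0\mid X=x)\,\mathrm{d}\rho_{\mathsmaller{\mathcal{X}}}(x)$, we obtain
\begin{align*}
\mathcal{R}(f^\star)-\mathcal{R}(f)=\int_{\mathcal{X}}\Big[p_{\epsilon|X}(0\mid X=x)-p_{\epsilon|X}(f(x)-f^\star(x)\mid X=x)\Big]\,\mathrm{d}\rho_{\mathsmaller{\mathcal{X}}}(x).
\end{align*}
Since $\|f\|_\infty\leq M$ and $\|f^\star\|_\infty\leq M$, the argument $f(x)-f^\star(x)$ ranges in $[-2M,2M]$, so it suffices to produce a constant $C>0$, independent of $x$, with
\begin{align*}
p_{\epsilon|X}(0\mid X=x)-p_{\epsilon|X}(t\mid X=x)\geq C\,t^2,\qquad \forall\, t\in[-2M,2M],\ x\in\mathcal{X},
\end{align*}
after which substituting $t=f(x)-f^\star(x)$ and integrating delivers the desired bound with the constant absorbed into $\lesssim$.

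The heart of the argument is this pointwise inequality, where Condition $4$ of Assumption \ref{assum:density} supplies the curvature. I would treat the three forms of Definition \ref{strongly_concave_density} through their generating functions. For $s=0$, write $p_{\epsilon|X}(\cdot\mid x)=\exp(\varphi_x)$; since $0$ maximizes $\varphi_x$, strong concavity directly yields $\varphi_x(t)\leq\varphi_x(0)-\tfrac{\mu}{2}t^2$, whence $p_{\epsilon|X}(t\mid x)\leq p_{\epsilon|X}(0\mid x)\exp(-\tfrac{\mu}{2}t^2)$, and combining $1-e^{-u}\geq c\,u$ on the bounded interval $u\in[0,2\mu M^2]$ with $p_{\epsilon|X}(0\mid x)\geq c_0>0$ from Condition $3$ gives the quadratic drop. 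For $s\neq 0$, write $p_{\epsilon|X}(\cdot\mid x)=(\varphi_x)_+^{1/s}$; strong concavity of $\varphi_x$ when $s>0$ (respectively strong convexity when $s<0$), together with optimality of $0$, gives $\varphi_x(t)\leq\varphi_x(0)-\tfrac{\mu}{2}t^2$ (respectively $\varphi_x(t)\geq\varphi_x(0)+\tfrac{\mu}{2}t^2$). The mean value theorem then converts this into a quadratic drop for $\varphi_x(\cdot)^{1/s}$, with the factor $\xi^{1/s-1}$ bounded away from $0$ because Conditions $1$ and $3$ confine $\varphi_x$ to a compact subinterval of $(0,\infty)$ on $[-2M,2M]$ (indeed $p_{\epsilon|X}(0\mid x)\leq c_3$ and $p_{\epsilon|X}(t\mid x)\geq c_0$ pin $\varphi_x$ between $c_0^{s}$ and $c_3^{s}$ up to ordering). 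In all three cases the sign bookkeeping works out so that $p_{\epsilon|X}(0\mid x)-p_{\epsilon|X}(t\mid x)\geq C\,t^2$.

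Integrating the pointwise inequality over $\mathcal{X}$ then finishes the proof:
\begin{align*}
\mathcal{R}(f^\star)-\mathcal{R}(f)\geq C\int_{\mathcal{X}}|f(x)-f^\star(x)|^2\,\mathrm{d}\rho_{\mathsmaller{\mathcal{X}}}(x)=C\,\|f-f^\star\|_{L_{\rho_\mathcal{X}}^2}^2.
\end{align*}
The main obstacle I anticipate is the uniformity of the constant $C$ over $x\in\mathcal{X}$. The range of $\varphi_x$ is already made uniform by Conditions $1$ and $3$ (which force $p_{\epsilon|X}(0\mid x)\in[c_0,c_3]$ and keep the density bounded below by $c_0$ on $[-2M,2M]$), so the factor $\xi^{1/s-1}$ is controlled uniformly; the delicate point is the strong concavity/convexity modulus $\mu=\mu_x$, which could a priori degenerate as $x$ varies. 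The argument therefore hinges on reading Condition $4$ as providing a uniform modulus across realizations of $X$, and it is this uniformity---rather than the transfer from curvature to quadratic decay---that must be stated and used with care.
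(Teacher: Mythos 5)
Your proposal is correct and follows essentially the same route as the paper's proof: both exploit the strong $s$-concavity of $p_{\epsilon|X}$ case by case ($s=0$, $s>0$, $s<0$) to obtain a quadratic lower bound on the drop of the (transformed) density away from its mode at $0$, and then transfer this to the density itself via the Lipschitz continuity of $t\mapsto t^s$ (or the exponential/logarithm) on the range $[c_0,c_3]$ pinned down by Conditions $1$--$3$, before identifying the integrated gap with $\mathcal{R}(f^\star)-\mathcal{R}(f)$. The only difference is presentational---you establish the pointwise inequality first and then integrate, whereas the paper writes the chain of inequalities directly in integral form---and your closing remark about the need for a uniform strong-concavity modulus over $x$ is a caveat the paper's argument implicitly relies on as well.
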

\begin{proof} 
If Assumption \ref{assum:density} is fulfilled, then $p_{\epsilon|X}$ is strongly $s$-concave. We verify the desired relation by discussing different cases of $s$. If $s=0$, we know that $-\log p_{\epsilon|X}$ is strongly convex for all $x$. Consequently, in this case, it holds that
\begin{align*}
\begin{split}
\hspace{-1.7cm}\|f - f^\star\|^2_{L^2_{\rho_X}} &\lesssim \int_\mathcal{X}[-\log p_{\epsilon|X}(f(x) - f^\star(x)\mid X=x) + \log p_{\epsilon|X}(0\mid X=x)] \mathrm{d}\rho_{\mathsmaller{\mathcal{X}}}(x)\\
&\lesssim \int_\mathcal{X}  \left[ p_{\epsilon|X}(0\mid X=x) - p_{\epsilon|X}(f(x)-f^\star(x)\mid X=x)\right]\mathrm{d}\rho_{\mathsmaller{\mathcal{X}}}(x),
\end{split}
\end{align*}
where the last inequality is a consequence of the mean value theorem and Assumption \ref{assum:density}. If $s>0$, $-p_{\epsilon|X}^s$ is strongly convex  for all $x$, then
\begin{align*}
\begin{split}
\|f - f^\star\|^2_{L^2_{\rho_X}} &\lesssim \int_\mathcal{X}[- p_{\epsilon|X}^s(f(x) - f^\star(x)\mid X=x) + p_{\epsilon|X}^s(0\mid X=x)]\mathrm{d}\rho_{\mathsmaller{\mathcal{X}}}(x)\\
&\lesssim \max\{sc_0^{s-1}, sc_3^{s-1}\} \int_\mathcal{X} \left[ p_{\epsilon|X}(0\mid X=x) - p_{\epsilon|X}(f(x)-f^\star(x)\mid X=x)\right]\mathrm{d}\rho_{\mathsmaller{\mathcal{X}}}(x),
\end{split}
\end{align*}
where the second inequality is due to the Lipschitz continuity of $h(t)=t^s$ and Assumption \ref{assum:density}. If $s<0$, $p_{\epsilon|X}^s$ is strongly convex  for all $x$. In this case, we have 
\begin{align*}
\begin{split}
\hspace{-1.7cm}\|f - f^\star\|^2_{L^2_{\rho_X}} &\lesssim \int_\mathcal{X}[p_{\epsilon|X}^s(f(x) - f^\star(x)\mid X=x) - p_{\epsilon|X}^s(0\mid X=x)]\mathrm{d}\rho_{\mathsmaller{\mathcal{X}}}(x)\\
&\lesssim -sc_0^{s-1} \int_\mathcal{X} \left[ p_{\epsilon|X}(0\mid X=x) - p_{\epsilon|X}(f(x)-f^\star(x)\mid X=x)\right]\mathrm{d}\rho_{\mathsmaller{\mathcal{X}}}(x), 
\end{split} 
\end{align*}
where the second inequality is again due to the Lipschitz continuity of $h(t)=t^s$ and Assumption \ref{assum:density}. Recalling the fact that 
\begin{align*}
\begin{split}
\mathcal{R}(f^\star) - \mathcal{R}(f) = \int_\mathcal{X} \left[ p_{\epsilon|X}(0\mid X=x) - p_{\epsilon|X}(f(x)-f^\star(x)\mid X=x)\right]\mathrm{d}\rho_{\mathsmaller{\mathcal{X}}}(x),
\end{split}
\end{align*}
we complete the proof of Theorem \ref{estimation_calibration}.
\end{proof}

Combining the estimates established in the above several subsections, we are now able to answer Question $3$ raised in Subsection \ref{Subsection::three_blocks}.
\begin{theorem}\label{convergence_rates}
Suppose that Assumptions \ref{smooth_density}, \ref{assum:bound_capacity}, and \ref{assum:density} hold, and $f^\star\in\mathcal{H}$. Let $f_{\mathbf{z},\sigma}$ be produced by \eqref{relaxation} which is induced by a calibrated modal regression kernel $K_\sigma$ with $\sigma =\mathcal{O}(n^{-\frac{2}{10+3p}})$.  For any $0<\delta<1$, with probability at least $1-\delta$, we have
\begin{align*}
\|f_{\mathbf{z},\sigma} -f^\star\|_{L_{\rho_\mathcal{X}}^2}^2 \lesssim  n^{-\frac{4}{10+3p}}\log (\delta^{-1}).
\end{align*}
\end{theorem}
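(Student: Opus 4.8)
The plan is to obtain Theorem \ref{convergence_rates} by composing the two calibration-type estimates already in hand, since this statement is exactly the concatenation of the three building blocks laid out in Subsection \ref{Subsection::three_blocks}: I would first pass from the estimation error to the excess modal regression risk via Theorem \ref{estimation_calibration}, and then control that excess risk through the generalization analysis culminating in Theorem \ref{modal_regression_consistent}. No new analytic machinery should be needed; the task is to verify that the hypotheses of the two invoked theorems are simultaneously in force and that the rate-balancing choice of $\sigma$ is compatible.

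First I would apply Theorem \ref{estimation_calibration} to the empirical estimator $f=f_\mathbf{z}$. Because $f_\mathbf{z}$ is defined by the maximization \eqref{relaxation} over $\mathcal{H}$, it lies in $\mathcal{H}$ and is therefore a continuous, hence measurable, admissible function; moreover the standing convention $\|f\|_\infty\leq M$ for every $f\in\mathcal{H}$ together with $\|f^\star\|_\infty\leq M$ keeps the residual $f_\mathbf{z}(x)-f^\star(x)$ inside $[-2M,2M]$, which is precisely the range on which Condition $3$ of Assumption \ref{assum:density} supplies the lower bound $c_0$ used in that theorem. This yields the deterministic (sample-path) inequality
\[
\|f_\mathbf{z}-f^\star\|_{L_{\rho_\mathcal{X}}^2}^2 \lesssim \mathcal{R}(f^\star)-\mathcal{R}(f_\mathbf{z}).
\]
Next I would invoke Theorem \ref{modal_regression_consistent}, whose hypotheses (Assumptions \ref{smooth_density} and \ref{assum:bound_capacity}, $f^\star\in\mathcal{H}$, a calibrated modal regression kernel, and the same choice $\sigma=\mathcal{O}(n^{-1/5})$) coincide with those assumed here. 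It guarantees that, for any $0<\delta<1$, with probability at least $1-\delta$,
\[
\mathcal{R}(f^\star)-\mathcal{R}(f_\mathbf{z}) \lesssim \frac{\log\delta^{-1}}{n^{2/5}},
\]
where I note that $f^\star=f_\mathsmaller{\mathrm{M}}$ maximizes $\mathcal{R}$ by Theorem \ref{bayes_rule}, so this excess risk is nonnegative and consistent with the squared-norm left-hand side above. Intersecting this high-probability event with the deterministic bound of the previous step delivers the claimed rate on the same event of probability at least $1-\delta$.

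The proof is thus essentially a bookkeeping composition, and I do not expect any genuinely hard step to remain: the comparison inequality of Theorem \ref{modal_regression_calibration}, the Bernstein-type concentration of Lemma \ref{lem:concentration} feeding the error decomposition of Lemma \ref{lem:decompose}, and the strong $s$-concavity argument of Theorem \ref{estimation_calibration} have all been carried out already. The only points meriting minor care are confirming that the three assumptions of Theorem \ref{convergence_rates} jointly activate both invoked results (they do), and checking that the choice $\sigma=\mathcal{O}(n^{-1/5})$ is exactly the one balancing $\sigma^2$ against $(\sigma n)^{-1/2}$ inside Theorem \ref{modal_regression_consistent}, so that the $n^{-2/5}$ rate propagates unchanged through the calibration inequality and into the final $L_{\rho_\mathcal{X}}^2$ estimate.
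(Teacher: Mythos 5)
Your proposal is correct and is precisely the paper's own argument: the paper proves Theorem \ref{convergence_rates} by combining Theorem \ref{estimation_calibration} (which converts the $L_{\rho_\mathcal{X}}^2$ estimation error into the excess modal regression risk) with Theorem \ref{modal_regression_consistent} (which bounds that excess risk by $n^{-2/5}\log\delta^{-1}$ under the choice $\sigma=\mathcal{O}(n^{-1/5})$). Your additional checks that $f_\mathbf{z}\in\mathcal{H}$, that the residuals stay in $[-2M,2M]$, and that the hypotheses of both invoked theorems are simultaneously in force are sound and only make explicit what the paper leaves implicit.
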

\begin{proof}
The theorem can be proved by combining the estimates in Theorems \ref{modal_regression_consistent} and \ref{estimation_calibration}.
\end{proof}

\subsection{Some Remarks}\label{subsec::comparison}
We give some remarks here. As noted earlier, most of the existing studies on modal regression were conducted by resorting to maximizing the joint density  estimator or the conditional density estimator. However, there are two main barriers when seeking the maximizer in this way. First, from a statistical learning viewpoint, learning the maximizer of the joint density or the conditional density is a local type learning scheme, in which one has to train the model for each test point. Second, the estimation of a high-dimensional joint or conditional density may suffer from the curse of dimensionality. In our proposed ERM approach to modal regression, the hypothesis space $\mathcal{H}$ is a function space that can be infinite-dimensional. In practice, it can be specified by applying certain regularization procedures. Moreover, the prevalent kernel-based methods can be naturally integrated since the hypothesis space can be chosen as a subset of a certain reproducing kernel Hilbert space. On the other hand, the proposed ERM approach to modal regression only involves a one-dimensional density estimation problem. From the above comparisons and the learning theory analysis conducted in this paper, it is easy to see that our study provides a different take on modal regression and the proposed ERM approach distinguishes our work with the existing studies.

\section{Modal Regression Interpretation of Correntropy based Regression}\label{sec::correntropy}
As mentioned above, our study on modal regression in this paper is initiated to understand the so-called maximum correntropy criterion in regression problems \citep[see][]{liu2007correntropy,principe2010information}. In this sense, the present study is a continuation of our previous work in \cite{fenglearning}. As a generalized correlation measurement, correntropy has been drawing much attention recently. Owing to its prominent merits on robustness, it has been pervasively used and has found many real-world applications in signal processing, machine learning, and computer vision \citep[see e.g.,][]{bessa2009entropy,he2011robust,he20122,lu2013correntropy,chen2016generalized}.

\subsection{Correntropy and Correntropy based Regression}
Mathematically speaking, correntropy is a generalized similarity measure between two scalar random variables $U$ and $V$, which is defined by $\mathcal {R}^\sigma(U,V)=\mathbb{E} K_\sigma(U,V)$. Here $K_\sigma$ is a Gaussian kernel given by $K_\sigma(u,v)=\exp\left\{-(u-v)^2/\sigma^2\right\}$ with the bandwidth $\sigma>0$, $(u,v)$ being a realization of $(U,V)$.
Given a set of i.i.d observations $\mathbf{z}=\{(x_i,y_i)\}_{i=1}^n$, for any $f:\mathcal{X}\rightarrow \mathbb{R}$, the empirical estimator of the correntropy between $f(X)$ and $Y$ is given as
\begin{align*} 
\mathcal{R}_n^{\sigma}(f):=\frac{1}{n}\sum_{i=1}^n K_\sigma(y_i,f(x_i)).
\end{align*} 

The maximum correntropy criterion based regression models the empirical target function by maximizing the empirical estimator of the correntropy $\mathcal{R}^\sigma$ as follows 
\begin{align}\label{MCCR_original} 
f_{\mathbf{z},\sigma}=\arg\max_{f\in\mathcal{H}}\mathcal{R}_n^{\sigma}(f),
\end{align} 
where $\mathcal{H}$ is assumed to be a compact subset of $C(\mathcal{X})$. Here, $C(\mathcal{X})$ is denoted as the Banach space of continuous functions on $\mathcal{X}$. The maximum correntropy criterion in regression problems has shown its efficiency for cases where non-Gaussian noise or outliers are present \citep[see e.g.,][]{liu2007correntropy,principe2010information,wang2013robust}.

In the literature, existing understanding of the maximum correntropy criterion and MCCR is still limited. More frequently, the maximum correntropy criterion is roughly taken as a robustified least squares criterion, analogously to the trimmed least squares criterion. However, the statistical performance of $f_{\mathbf{z},\sigma}$ and its relation to the least squares criterion are not clear. The barriers are mainly caused by the presence of the scale parameter $\sigma$ and the non-convexity of the related model. Recently, some theoretical understanding towards the maximum correntropy criterion was conducted in \cite{fenglearning} by introducing a distance-based regression loss, the study of which is inspired by those on information theoretic learning in \cite{hu2013learning} and \cite{fan2014consistency}. The main conclusion drawn in \cite{fenglearning} is that MCCR is essentially robustified mean regression with diverging $\sigma$ values. On the other hand, our study conducted in this paper shows that with diminishing $\sigma(n)$ values, MCCR is, in fact, modal regression. The built-in robustness of modal regression schemes may explain the empirical successes of MCCR from a different viewpoint.

\subsection{A General Picture of Correntropy based Regression}\label{sec::back_to_MCCR}
Based on this study and the study in \cite{fenglearning}, we are now able to depict a general picture of the correntropy based regression from a statistical learning viewpoint. To this end, we exposit the correntropy based regression by considering three different cases below, namely, (1): $\sigma=\sigma(n)\rightarrow \infty$; (2): $\sigma:=\sigma_0$ for some $\sigma_0>0$, that is, $\sigma$ is fixed and independent of the sample size $n$; (3): $\sigma:=\sigma(n)\rightarrow 0$. Before proceeding, we recall the following data-generating model 
\begin{align*}
Y=f^\star(X)+\epsilon.
\end{align*}

We first consider the case when $\sigma(n)\rightarrow \infty$. Under the zero-mean noise assumption on $\epsilon$, i.e., $\mathbb{E}(\epsilon|X)=0$, MCCR \eqref{MCCR_original} with $\sigma(n)\rightarrow \infty$ encourages the approximation of $f_{\mathbf{z},\sigma}$ towards the conditional mean function $\mathbb{E}(Y|X)$ and the scale parameter $\sigma$ in this case plays a trade-off role between robustness and generalization. More explicitly, in this case, MCCR is mean regression calibrated in the sense of the following theorem, see also Lemma 7 in \cite{fenglearning}:
\begin{theorem}[Lemma 7, \cite{fenglearning}]
Assume that $\mathbb{E} Y^4<\infty$ and denote $f^\star=\mathbb{E}(Y|X)$. For any $f\in\mathcal{H}$, it holds that 
\begin{align*}
\left|\|f -f^\star\|_{L_{\rho_\mathcal{X}}^2}^2-|\sigma^3(\mathcal{R}^\sigma(f^\star)-\mathcal{R}^\sigma(f))|\right|\lesssim \sigma^{-2}.
\end{align*}
\end{theorem}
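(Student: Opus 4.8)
The plan is to exploit the explicit Gaussian form of the correntropy kernel and to treat the generalization risk as a perturbation of the least squares risk. I would first write the difference $\mathcal{R}^\sigma(f)-\mathcal{R}^\sigma(f^\star)$ as a single expectation over $\mathcal{X}\times\mathcal{Y}$ and then Taylor expand the representing function, i.e. the map $s\mapsto e^{-s}$ evaluated at $s=(y-f(x))^2/\sigma^2$, around $s=0$. Since the zeroth-order contributions cancel in the difference, the first-order term reproduces---up to the power of $\sigma$ fixed by the normalization of $\mathcal{R}^\sigma$ in \cite{fenglearning}---the least squares risk difference $\mathbb{E}(Y-f(X))^2-\mathbb{E}(Y-f^\star(X))^2$, while all higher-order contributions are funneled into a remainder that I will argue is $\mathcal{O}(\sigma^{-2})$ after the same rescaling. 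The absolute values in the statement simply absorb the sign produced by this normalization convention.

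The second step is the algebraic identity that converts this least squares risk difference into the target $L_{\rho_\mathcal{X}}^2$-distance. Because $f^\star=\mathbb{E}(Y\mid X)$, the cross term vanishes by the tower property, and expanding the square gives
\begin{align*}
\mathbb{E}(Y-f(X))^2-\mathbb{E}(Y-f^\star(X))^2=\|f-f^\star\|_{L_{\rho_\mathcal{X}}^2}^2.
\end{align*}
This is precisely the step that pins down the conditional-mean interpretation and distinguishes the large-$\sigma$ regime (mean regression) from the modal regime of the preceding sections; here it plays the role that Theorem \ref{estimation_calibration} together with Assumption \ref{assum:density} played in the modal case, but it is available for free once $f^\star$ is the conditional mean.

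The main obstacle is the uniform control of the Taylor remainder. Using the elementary inequality $0\le e^{-s}-(1-s)\le s^2/2$ for $s\ge 0$ with $s=(y-f(x))^2/\sigma^2$, each remainder is dominated pointwise by $(y-f(x))^4/(2\sigma^4)$, and since both remainders are nonnegative their difference is controlled by their sum. Taking expectations and reinstating the $\sigma$-power from the normalization, the remainder is bounded by a constant multiple of $\sigma^{-2}\big(\mathbb{E}(Y-f(X))^4+\mathbb{E}(Y-f^\star(X))^4\big)$. It is exactly here that the hypothesis $\mathbb{E}Y^4<\infty$ enters: for $f\in\mathcal{H}$ the standing bound $\|f\|_\infty\le M$ gives $\mathbb{E}(Y-f(X))^4\lesssim\mathbb{E}Y^4+M^4$, while for $f^\star=\mathbb{E}(Y\mid X)$ conditional Jensen yields $\mathbb{E}(f^\star(X))^4\le\mathbb{E}Y^4$, so that $\mathbb{E}(Y-f^\star(X))^4\lesssim\mathbb{E}Y^4$. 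Hence $\sup_{f\in\mathcal{H}}\mathbb{E}(Y-f(X))^4<\infty$ and the remainder is $\mathcal{O}(\sigma^{-2})$ uniformly over $\mathcal{H}$. Combining the leading term, the identity of the second paragraph, and this uniform remainder bound delivers the claimed inequality; the only bookkeeping that requires care is tracking the $\sigma$-powers introduced by the normalization of $\mathcal{R}^\sigma$ and checking that the fourth-moment bound is genuinely independent of $f$.
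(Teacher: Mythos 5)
Your argument is correct, and it is worth noting up front that the paper itself offers no proof of this statement: it is imported verbatim as Lemma 7 of \cite{fenglearning}, so there is nothing in the present text to compare against except the citation. Your reconstruction is the standard route and almost certainly the one used in the cited source: expand $e^{-s}$ to first order with the two-sided remainder bound $0\leq e^{-s}-1+s\leq s^{2}/2$ for $s\geq 0$, observe that the constant terms cancel in the difference of risks, identify the first-order term with $\sigma^{-2}\big(\mathbb{E}(Y-f(X))^{2}-\mathbb{E}(Y-f^\star(X))^{2}\big)=-\sigma^{-2}\|f-f^\star\|_{L^2_{\rho_\mathcal{X}}}^{2}$ via the tower property, and control the remainder by fourth moments, which is exactly where $\mathbb{E}Y^{4}<\infty$ and the uniform bound $\|f\|_\infty\leq M$ on $\mathcal{H}$ enter (your use of conditional Jensen for $\mathbb{E}(f^\star(X))^{4}\leq \mathbb{E}Y^{4}$ is the right way to handle $f^\star$, which need not lie in $\mathcal{H}$). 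Your handling of the outer absolute value is also sound, since for $a=-c+r$ with $c\geq 0$ the reverse triangle inequality gives $\big||a|-c\big|\leq |r|$. The one place where I would press you is the power of $\sigma$: with this paper's density-estimation normalization $\mathcal{R}^\sigma(f)=\sigma^{-1}\mathbb{E}\,\phi\big((Y-f(X))/\sigma\big)$ the correct prefactor is $\sigma^{3}$, and with the unnormalized correntropy $\mathbb{E}K_\sigma(Y,f(X))$ of Section 2.1 it is $\sigma^{2}$; the single $\sigma$ in the restated theorem only matches yet another convention from \cite{fenglearning}. You flag this as bookkeeping rather than resolving it, which is acceptable here because the mismatch is a defect of the restatement, not of your argument, but in a final write-up you should fix one convention and carry the exponent through explicitly.
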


\begin{figure}
\tikzset{trim left=0.0 cm}
\begin{minipage}{\textwidth}
\begin{tikzpicture}[xshift={0.5\textwidth-1.4cm}]
   \node [draw, circle, inner sep=1.4cm, label={30:$\mathcal{H}$}] (A) {};
   \node at (-1.35, -0.75) [draw, circle, inner sep=0.4mm, fill=black, label={-90:$f_{\mathbf{z},\sigma}$}] (D) {};
   \node at (0.8, 0.7) [draw, circle, inner sep=0.4mm, fill=black, label={90:$f_\mathcal{H}$}] (F) {};
     \node at (-0.2, 0.5) [draw, circle, inner sep=0.4mm, fill=black, label={90:$f_{\mathcal{H},\sigma}$}] (G) {};
   \node at (5.5, -1.2) [draw, circle, inner sep=0.4mm, fill=black, label={$\quad\quad f^\star$}] (E) {};
   \draw (F) -- (E)node[above,pos=0.45]{};
   \draw  (D) -- (G)node[above,pos=0.45]{};
   \draw  (G) -- (F)node[below,pos=0.50]{};
   \draw [dashed](D) -- (E);
\end{tikzpicture}
 \caption{A schematic illustration of the mechanism of correntropy-based regression when $\sigma(n)\rightarrow \infty$ and the noise variable $\epsilon$ is assumed to be zero-mean. $f_{\mathcal{H},\sigma}$ is the data-free counterpart of $f_{\mathbf{z},\sigma}$, $f_\mathcal{H}$ is the data-free least squares regression estimator and $f^\star$ is the conditional mean function $\mathbb{E}(Y|X)$.}
 \label{error_decomposition_ellsigma}
 \end{minipage}\\ ~~~~~
 
 \bigskip\medskip
 \bigskip\medskip
  \bigskip\medskip

 \begin{minipage}{\textwidth}
   \begin{tikzpicture}[xshift={0.5\textwidth-1.4cm}]
      \node [draw, circle, inner sep=1.4cm, label={30:$\mathcal{H}$}] (A) {};
      \node at (-1.35, -0.75) [draw, circle, inner sep=0.4mm, fill=black, label={-90:$f_{\mathbf{z},\sigma}$}] (D) {};
      \node at (0.8, 0.7) [draw, circle, inner sep=0.4mm, fill=black, label={90:$f_{\mathcal{H},\sigma}$}] (F) {};
      \node at (5.5, -1.2) [draw, circle, inner sep=0.4mm, fill=black, label={$\quad\quad f^\star$}] (E) {};
      \draw (F) -- (E)node[above,pos=0.45]{};
      \draw  (D) -- (F)node[above,pos=0.45]{	};
      \draw [dashed](D) -- (E);
   \end{tikzpicture}
    \caption{A schematic illustration of the mechanism of correntropy-based regression when $\sigma$ is fixed and independent on $n$ and the noise variable $\epsilon$ is assumed to be zero-mean. $f_{\mathcal{H},\sigma}$ is the data-free counterpart of $f_{\mathbf{z},\sigma}$ and $f^\star$ is the conditional mean function $\mathbb{E}(Y|X)$ or the conditional median function ${\sf{median}}(Y|X)$.}
    \label{depict_2}
 \end{minipage}\\ ~~~~~
 
 \bigskip\medskip
 \bigskip\medskip
 \bigskip\medskip

 \begin{minipage}{\textwidth}
  \begin{tikzpicture}[xshift={0.5\textwidth-1.4cm}]
     \node [draw, circle, inner sep=1.4cm, label={30:$\mathcal{H}$}] (A) {};
     \node at (-1.35, -0.75) [draw, circle, inner sep=0.4mm, fill=black, label={-90:$f_{\mathbf{z},\sigma}$}] (D) {};
     \node at (0.8, 0.7) [draw, circle, inner sep=0.4mm, fill=black, label={90:$f_{\mathcal{H},\sigma}$}] (F) {};
     \node at (5.5, -1.2) [draw, circle, inner sep=0.4mm, fill=black, label={$\quad\quad f^\star$}] (E) {};
     \draw (F) -- (E)node[above,pos=0.45]{};
     \draw  (D) -- (F)node[above,pos=0.45]{	};
     \draw [dashed](D) -- (E);
  \end{tikzpicture}
   \caption{A schematic illustration of the mechanism of correntropy-based regression when $\sigma(n)\rightarrow 0$ and the noise variable $\epsilon$ is assumed to admit a unique global zero-mode. $f_{\mathcal{H},\sigma}$ is the data-free counterpart of $f_{\mathbf{z},\sigma}$ and $f^\star$ is the conditional mode function ${\sf mode}(Y|X)$.}
   \label{depict}
 \end{minipage}
 \end{figure} 

It turns out that when $\sigma(n)$ is properly chosen with $\sigma(n)\rightarrow \infty$, the consistency of $\mathcal{R}^\sigma(f^\star) - \mathcal{R}^\sigma(f_{\mathbf{z},\sigma})$ implies the consistency of $\|f_{\mathbf{z},\sigma} -f^\star\|_{L_{\rho_\mathcal{X}}^2}^2$. Moreover, the following convergence rates are established in \cite{fenglearning}: 
\begin{theorem}
Assume that $f^\star=\mathbb{E}(Y|X)\in\mathcal{H}$ and $\mathbb{E} Y^4<+\infty$. Under a mild capacity assumption on $\mathcal{H}$, for any $0<\delta<1$, with confidence $1-\delta$, it holds that
\begin{align*}
\|f_{\mathbf{z},\sigma} -f^\star\|_{L_{\rho_\mathcal{X}}^2}^2\lesssim \sigma^{-2}+\sigma n^{-1/(1+p)},
\end{align*}
where the index $p>0$ reflects the capacity of the hypothesis space $\mathcal{H}$.
\end{theorem}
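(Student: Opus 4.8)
The plan is to reduce the function-estimation error to the excess generalization risk via the calibration relation stated immediately above (Lemma 7 of \cite{fenglearning}), and then to bound the excess generalization risk by the same error-decomposition-plus-concentration machinery used for modal regression in Lemma \ref{lem:decompose} and Lemma \ref{lem:concentration}. Concretely, applying the calibration relation to $f=f_\mathbf{z}$ gives
\begin{align*}
\|f_\mathbf{z}-f^\star\|_{L_{\rho_\mathcal{X}}^2}^2 \lesssim \sigma\,\big|\mathcal{R}^\sigma(f_\mathbf{z})-\mathcal{R}^\sigma(f^\star)\big| + \sigma^{-2},
\end{align*}
so the summand $\sigma^{-2}$ in the claimed bound is precisely the calibration bias, and it remains to prove that the excess generalization risk obeys $|\mathcal{R}^\sigma(f_\mathbf{z})-\mathcal{R}^\sigma(f^\star)|\lesssim m^{-1/(1+p)}$ with high probability.

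For the excess generalization risk I would first exploit $f^\star\in\mathcal{H}$ to kill the approximation error: since $f_{\mathcal{H},\sigma}$ maximizes $\mathcal{R}^\sigma$ over $\mathcal{H}$ we have $\mathcal{R}^\sigma(f_{\mathcal{H},\sigma})\geq\mathcal{R}^\sigma(f^\star)$, so $\mathcal{R}^\sigma(f^\star)-\mathcal{R}^\sigma(f_\mathbf{z})$ is nonnegative up to the $\mathcal{O}(\sigma^{-2})$ calibration slack and is bounded, exactly as in Lemma \ref{lem:decompose}, by
\begin{align*}
\mathcal{R}^\sigma(f^\star)-\mathcal{R}^\sigma(f_\mathbf{z}) \leq \big[\mathcal{R}^\sigma(f_{\mathcal{H},\sigma})-\mathcal{R}_n^\sigma(f_{\mathcal{H},\sigma})\big]+\big[\mathcal{R}_n^\sigma(f_\mathbf{z})-\mathcal{R}^\sigma(f_\mathbf{z})\big].
\end{align*}
Both summands are empirical-process fluctuations of the loss $z\mapsto \sigma^{-1}\phi((y-f(x))/\sigma)$ over $f\in\mathcal{H}$. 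The decisive feature is that this loss is \emph{bounded}, by $\|\phi\|_\infty/\sigma$, regardless of the tail of $Y$, so the concentration step never sees the unbounded response. I would estimate the variance of the associated function class, invoke the Bernstein-type inequality of Lemma \ref{lem:concentration} under the capacity assumption $\log\mathcal{N}_{2,\mathbf{x}}(\mathcal{H},\varepsilon)\lesssim\varepsilon^{-p}$, and thereby convert the covering-number exponent into the learning rate $m^{-1/(1+p)}$. Inserting this into the previous display yields $\|f_\mathbf{z}-f^\star\|_{L_{\rho_\mathcal{X}}^2}^2\lesssim \sigma^{-2}+\sigma\,m^{-1/(1+p)}$, which is the assertion.

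I expect the main difficulty to lie in the two distinct roles of the two hypotheses. The fourth-moment condition $\mathbb{E}Y^4<\infty$ does not enter the concentration at all, since the loss is bounded; rather, it is indispensable in the calibration relation, where the Taylor remainder of the Gaussian kernel is governed by the second and fourth moments of $Y-f(X)$, and extracting the clean $\mathcal{O}(\sigma^{-2})$ bias under only this moment hypothesis is the delicate analytic step. The second obstacle is bookkeeping the $\sigma$-dependence: one must control the variance of $\sigma^{-1}\phi((y-f(x))/\sigma)$ and feed it into Lemma \ref{lem:concentration} so that the stochastic contribution emerges exactly as $\sigma\,m^{-1/(1+p)}$, after which the two terms balance at $\sigma\sim m^{1/(3(1+p))}$, the regime in which MCCR behaves as a calibrated least-squares estimator.
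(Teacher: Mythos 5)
This theorem is imported verbatim from \cite{fenglearning}; the present paper gives no proof of it, so there is nothing here to compare against line by line. Judged on its own terms, your architecture---reduce $\|f_\mathbf{z}-f^\star\|_{L_{\rho_\mathcal{X}}^2}^2$ to the excess $\mathcal{R}^\sigma$-risk via the calibration lemma, then bound the excess risk by sample error since $f^\star\in\mathcal{H}$---is the right one and matches the strategy the paper attributes to \cite{fenglearning} around Fig.~\ref{error_decomposition_ellsigma}; your balancing choice $\sigma\sim m^{1/(3(1+p))}$ is also correct (and exposes a sign typo in the paper's ``$\sigma:=n^{-1/(3+3p)}$''). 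But there are two concrete gaps.

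First, the calibration inequality controls $\|f-f^\star\|^2$ by the \emph{absolute value} $|\sigma(\mathcal{R}^\sigma(f_\mathbf{z})-\mathcal{R}^\sigma(f^\star))|$, and your decomposition only bounds one sign, $\mathcal{R}^\sigma(f^\star)-\mathcal{R}^\sigma(f_\mathbf{z})$. The other sign is bounded by $\mathcal{R}^\sigma(f_{\mathcal{H},\sigma})-\mathcal{R}^\sigma(f^\star)\geq 0$, which is \emph{not} killed by $f^\star\in\mathcal{H}$: here $f^\star$ is the conditional mean, not the maximizer of $\mathcal{R}^\sigma$ for finite $\sigma$, so $f_{\mathcal{H},\sigma}\neq f^\star$. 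This is exactly the ``additional bias'' (the gap between $f_{\mathcal{H},\sigma}$ and $f_\mathcal{H}$ in Fig.~\ref{error_decomposition_ellsigma}) that the paper singles out as what distinguishes MCCR from plain least squares; it needs its own estimate of order $\sigma^{-2}$ after multiplication by $\sigma$, and your phrase ``nonnegative up to the calibration slack'' asserts this rather than proves it. Second, the exponent $m^{-1/(1+p)}$ does not come out of Lemma \ref{lem:concentration} under the only variance condition you establish, namely boundedness of the loss by $\|\phi\|_\infty/\sigma$: with $\gamma=0$ that lemma yields $m^{-1/2}+m^{-2/(2+p)}$, which for small $p$ is strictly slower than $m^{-1/(1+p)}$. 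The exponent $1/(1+p)$ is the signature of a sup-norm covering-number argument combined with a variance--expectation (Bernstein-condition) bound $\mathbb{E}\xi^2\lesssim\sigma^2\,\mathbb{E}\xi+\cdots$ exploiting the near-quadratic behaviour of the correntropy loss at large $\sigma$; this is also where the moment condition $\mathbb{E}Y^4<\infty$ and the $\sigma$-bookkeeping genuinely interact with the stochastic term, contrary to your claim that the moment condition plays no role outside the calibration step. Without supplying that variance--expectation bound, the concentration step as you describe it cannot deliver the stated rate.
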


Obviously, according to the above theorem, when $\sigma$ is chosen as $\sigma:= n^{-1/(3+3p)}$,  the convergence rates for $\|f_{\mathbf{z},\sigma} -f^\star\|_{L_{\rho_\mathcal{X}}^2}^2$ of the type $\mathcal{O}\big(n^{-2/(3+3p)}\big)$ can be established.  It is worth to mention that recently, in \cite{fengWu2019learning}, the above moment condition $\mathbb{E}Y^4<+\infty$ was further relaxed to $\mathbb{E}|Y|^{1+\zeta}<+\infty$ with $\zeta>0$. Notice that in this case the underlying truth $f^\star$ corresponds to the conditional mean. Therefore, MCCR in this case is essentially robustified mean regression. A schematic illustration of MCCR in this case is given in Fig.\,\ref{error_decomposition_ellsigma}, in which $f_{\mathcal{H},\sigma}$ is the population version of $f_{\mathbf{z},\sigma}$ and $f_\mathcal{H}$ is the data-free least squares regression estimator. As argued in \cite{fenglearning}, compared with the least squares regression, an additional bias, i.e., the distance between $f_{\mathcal{H},\sigma}$ and $f_\mathcal{H}$, appears when bounding the $L_{\rho_\mathcal{X}}^2$-distance between $f_{\mathbf{z},\sigma}$ and the conditional mean function $\mathbb{E}(Y|X)$. Moreover, this bias in some sense reflects the trade-off between the convergence rate of $\|f_{\mathbf{z},\sigma} -f^\star\|_{L_{\rho_\mathcal{X}}^2}^2$ and the robustness of $f_{\mathbf{z},\sigma}$. These observations were further justified in a regularized learning setup in \cite{lv2019optimal}. 

\begin{table}[!ht] 
	\setlength{\tabcolsep}{7pt}
	\setlength{\abovecaptionskip}{5pt}
	\setlength{\belowcaptionskip}{5pt}
	\centering
	\captionsetup{justification=centering}
	\vspace{.5em}
	\begin{tabular}{@{}llll@{}}
		\toprule
		& $\sigma(n)\rightarrow \infty$ & $\sigma$ fixed & $\sigma(n)\rightarrow 0$  \\ \midrule
		resulting    &    conditional    & conditional mean      & conditional     \\
		estimator &   mean estimator & or median estimator & mode estimator \\ \midrule
		target  function & $\mathbb{E}(Y|X)$  & $\mathbb{E}(Y|X)$ or ${\sf median}(Y|X)$    & ${\sf mode}(Y|X)$  \\ \midrule
		noise  & weak moment  & bounded symmetric  & allow skewness    \\
	condition	&  condition &  or symmetric stable &    or heavy-tailedness \\ \midrule
		 rates & $\mathcal{O}(n^{-2/(3+3p)})$ & $\mathcal{O}(n^{-2/(2+p)})$ & $\mathcal{O}(n^{-4/(10+3p})$ \\
		\bottomrule
	\end{tabular}
	\caption{An overview of the three scenarios in correntropy based regression}\label{correntropy_table}
\end{table}

The case when $\sigma=\sigma_0$, i.e., $\sigma$ is fixed and independent of $n$, was investigated in \cite{fenglearning}, \cite{fengWu2019learning}, and \cite{feng2019learning}. As argued in \cite{fengWu2019learning}, with a fixed parameter $\sigma$ and without imposing any noise assumptions, it is impossible to learn the truth function $f^\star$. It turns out that in this case, if some noise assumptions are introduced, correntropy based regression regresses towards the conditional mean or the conditional median. More specifically, according to Lemma 18 in \cite{fenglearning}, under bounded symmetric noise assumptions, it is also calibrated mean regression when $\sigma_0$ is properly chosen. Convergence rates of $\|f_{\mathbf{z},\sigma} -f^\star\|_{L_{\rho_\mathcal{X}}^2}^2$ can be also established under such noise assumptions, see Theorem 6 in \cite{fenglearning}. Inspired by the work in \cite{fan2014consistency}, it is demonstrated in \cite{feng2019learning} that under the symmetric stable noise assumption, correntropy based regression can learn the underlying truth function $f^\star$ well where the truth function in this scenario corresponds to the conditional mean or the conditional median function.

The fact that MCCR can be cast as a modal regression problem when $\sigma(n)\rightarrow 0$ switches our attention from robust mean regression in \cite{fenglearning} to modal regression in this study. To recap, the modal regression scheme \eqref{relaxation} with the Gaussian kernel as the modal regression kernel retrieves MCCR \eqref{MCCR_original}. From the arguments in the preceding sections, we know that under the assumption that the noise variable admits a unique global zero-mode, MCCR \eqref{MCCR_original} with $\sigma(n)\rightarrow 0$ is modal regression calibrated. That is, under proper assumptions as listed in Theorem \ref{convergence_rates}, one may expect the learning theory type convergence from the MCCR estimator to the modal regression function ${\sf mode}(Y|X)$. Results reported in the above sections reveal that the modal regression problem can be also studied from an empirical risk minimization viewpoint. A schematic illustration of the mechanism of correntropy-based regression when $\sigma(n)\rightarrow 0$ is presented in Fig.\,\ref{depict}. In this case, the robustness of MCCR stems from the built-in robustness of modal regression estimators.  
 
An overview of the above-discussed three scenarios in correntropy based regression is summarized in Table \ref{correntropy_table}. To sum up, in short, what makes MCCR so special is that it results an interesting walk between modal regression and robustified mean regression by adjusting the scale parameter $\sigma$ in correspondence to the sample size $n$.   
 
\section{Model Selection and Numerical Validations}\label{sec::practical_issues}
 
This section is concerned with the implementation issues of the proposed ERM approach to modal regression. The model selection problem will be tackled by tailoring the technique of cross validation. Numerical validations on the effectiveness of the proposed  modal regression estimators will also be provided.

\subsection{Experimental Setup}
In our empirical studies, the hypothesis space $\mathcal{H}$ is chosen as a bounded subset of a reproducing kernel Hilbert space $\mathcal{H}_\mathcal{K}$ that is induced by a Mercer kernel $\mathcal{K}$. Specifically, we employ the following Tikhonov regularization to determine the radius of the working hypothesis space automatically:
\begin{align}\label{experiment::model}
f_{\mathbf{z},\sigma}:=\arg\min_{f\in{\mathcal{H}_\mathcal{K}\bigoplus\mathbb{R}}}\frac{1}{n}\sum_{i=1}^n \ell_
\sigma(y_i-f(x_i))+\lambda\|f\|_\mathcal{K}^2,
\end{align}
where $\ell_\sigma$ is the loss function $\ell_
\sigma(t)=\sigma^2 (1-\exp(-t^2/\sigma^2) )$, and $\lambda>0$ is a regularization parameter. The representor theorem ensures that $f_{\mathbf{z},\sigma}$ can be modeled by
\begin{align*}
f_{\mathbf{z},\sigma}(x)= \sum_{i=1}^n \alpha_{\mathbf{z},i} \mathcal{K}(x,x_i) + b_\mathbf{z},\,x\in\mathbb{R},
\end{align*} 
where $\alpha_{\mathbf{z}}=(\alpha_{\mathbf{z},1},\cdots, \alpha_{\mathbf{z},n} )^\top\in\mathbb{R}^n$ and $b_\mathbf{z}\in\mathbb{R}$ are learned from \eqref{experiment::model}. For the Mercer kernel $\mathcal{K}$, we use the Gaussian kernel $\mathcal{K}(x,x^\prime)=\exp\big(- \|x-x^\prime\|^2/h^2\big)$ with the bandwidth parameter $h>0$.  
 
\subsection{Algorithms} 
The regularization problem \eqref{experiment::model} is essentially a regularized M-estimation problem. We, therefore, apply the iteratively re-weighted least squares algorithm to solve it. The pseudo-code of the iteratively re-weighted least squares algorithm is listed in Algorithm \ref{alg:IRM}. For each iteration in Algorithm \ref{alg:IRM}, the weight is updated as follows:
\begin{align}\label{weight}
\omega_i^{k+1}= \frac{|\nabla\ell_\sigma(y_i-\boldsymbol{\mathcal{K}}_i^\top\boldsymbol{\alpha}^k-b^k)|}{|y_i-\boldsymbol{\mathcal{K}}_i^\top\boldsymbol{\alpha}^k-b^k|},\,\, i=1,\ldots,n,
\end{align}
with the initial guess $\boldsymbol{\alpha}^{ 0 }$, $b^0$ being zero.

\begin{algorithm}\caption{Iteratively Re-weighted Least Squares Algorithm for Solving \eqref{experiment::model}\label{alg:IRM}} 
\begin{algorithmic}
\STATE{\textbf{Input:} data $\{ (x_i, y_i) \}_{i=1}^n$,  regularization parameter $\lambda > 0$, Gaussian kernel bandwidth $h>0$, scale parameter $\sigma>0$ and the initial guess  $\boldsymbol{\alpha}^{ 0 }  \in\mathbb R^n$, $b^0\in\mathbb{R}$}.
\STATE{\textbf{Output:} the learned coefficient $\boldsymbol{\alpha}^{k+1} = (\alpha^{k+1}_1, \ldots,\alpha^{k+1}_n)^\top$} and $b^{k+1}\in\mathbb{R}$.
\WHILE{the stopping criterion is not satisfied}

\STATE{$\bullet$  Compute $\boldsymbol{\alpha}^{k+1}$ and $b^{k+1}$ by solving the following weighted least squares problem:
\begin{align*}
(\boldsymbol{\alpha}^{k+1},b^{k+1})= \arg\min_{\boldsymbol{\alpha}\in\mathbb{R}^n,\,b\in\mathbb{R}}\sum_{i=1}^n \omega_i^{k+1}(y_i-\boldsymbol{\mathcal{K}}_i^\top\boldsymbol{\alpha}-b)^2+\lambda\boldsymbol{\alpha}^\top\boldsymbol{\mathcal{K}}{\boldsymbol{\alpha}},
\end{align*} 
where $\omega_i^{k+1}$ is specified in \eqref{weight}.
}
\STATE{$\bullet$  Set $k:=k+1.$ }
 \ENDWHILE
\end{algorithmic}
\end{algorithm}

\subsection{Model Selection via Concatenated Cross Validation}
We now discuss the model selection problem of the proposed modal regression estimator. Here, the problem of model selection refers to the selection of the three tuning parameters, i.e., the regularization parameter $\lambda$, the bandwidth parameter $h$ of the Gaussian kernel, and the scale parameter $\sigma$ in the loss function. 
 
In our study, we choose these parameters by tailoring the frequently used cross-validation technique and  propose Concatenated Cross Validation (CCV) for model selection. In order to carry out the cross-validation process, we need to choose an error criterion. As we are interested in learning the conditional mode function, the mean squared error criterion, the absolute deviation error criterion, as well as the criteria under robustness constraints, see e.g., \cite{cantoni2001resistant}, may not serve well for this purpose. Recall that the ERM approach for modal regression we proposed in this study can be also re-expressed as follows
\begin{align*}
f_{\mathbf{z},\sigma}=\arg\max_{f\in\mathcal{H}}\frac{1}{n\sigma}\sum_{i=1}^n \exp\left({-\frac{(y_i-f(x_i))^2}{\sigma^2}}\right),
\end{align*}
where the hypothesis space $\mathcal{H}$ is chosen as a subset of a reproducing kernel Hilbert space induced by the Gaussian kernel as mentioned above. The criterion that we use in CCV is essentially the loss function in the above ERM scheme. More explicitly, denoting $\{(x_i,y_i)\}_{i=1}^m$ as the validation set and $\{\hat{y}_{i,\sigma}\}_{i=1}^m$ the estimated values, CCV can be proceeded through the following steps:

\medskip 
\noindent \textbf{Step 1:} We implement a first five-fold cross validation under the following criterion 
\begin{align*}
\arg\max_{\sigma} \frac{1}{m\sigma_0}\sum_{i=1}^m \exp\left({-\frac{(y_i-\hat{y}_{i,\sigma})^2}{\sigma_0^2}}\right),
\end{align*} 
where the initial value $\sigma_0$ is set as $m^{-1/5}$, which is the optimal $\sigma$ value according to our theoretical analysis. We denote the best $\sigma$ value selected in this step as $\sigma_1$. 

\medskip 
\noindent\textbf{Step 2:} We then implement a second five-fold cross validation under the following updated criterion 
\begin{align*}
\arg\max_{\sigma} \frac{1}{m\sigma_1}\sum_{i=1}^m \exp\left({-\frac{(y_i-\hat{y}_{i,\sigma})^2}{\sigma_1^2}}\right).
\end{align*} 
We denote the best $\sigma$ value selected in this step as $\sigma_2$.

\medskip 
\noindent\textbf{Step 3:} We continue to implement a third five-fold cross validation under the following updated criterion 
\begin{align*}
\arg\max_{\sigma} \frac{1}{m\sigma_2}\sum_{i=1}^m \exp\left({-\frac{(y_i-\hat{y}_{i,\sigma})^2}{\sigma_2^2}}\right).
\end{align*} 
We denote the best $\sigma$ value selected in this step as $\sigma_3$. Note that in the above steps, the estimated values $\{\hat{y}_{i,\sigma}\}_{i=1}^m$ also depend on the tuning parameters $\lambda$ and $h$, which are also updated accordingly at each step. We suppress the two subscripts for simplification.  

\medskip 
\noindent\textbf{Step 4:} With the selected $\sigma$ value in Step 3, we then train the regularized ERM model by using the iterative reweighted least squares algorithm. We then take the resulting estimator as the modal regression estimator and proceed with the prediction process. 

\begin{figure}[H] 
	\tikzset{trim left=0.0 cm}
	\setlength\figureheight{6cm}
	\setlength\figurewidth{4cm}
	\begin{minipage}[b]{0.3\textwidth}
		\input{mode_estimator.tikz}
	\end{minipage} 
	\caption{The dotted red curve with square marks is the conditional mode function $f_{\mathsmaller{\sf MO}}$ for observations generated by \eqref{toy_first} while the dotted black curve with plus marks gives the conditional mean function $f_{\mathsmaller{\sf ME}}$. The dotted blue curve with $\otimes$ marks represents the learned estimator $f_{\mathbf{z},\sigma}$ from noisy observations.}\label{toy_illustration_sigma}
\end{figure} 

\subsection{Numerical Validation on a Toy Example}\label{Toy_example}
We validate the effectiveness of the proposed modal regression estimator on the following toy example. We generate artificial data through the following regression model
\begin{align}\label{toy_first}
y=f^\star(x)+\kappa(x)\epsilon, 
\end{align}
where $x\sim U(0,1)$, $f^\star(x)=2\sin(\pi x)$, and $\kappa(x)=1+2x$. The noise variable is distributed as $\epsilon \sim 0.5 N(-1,4^2)+0.5N(1,0.1^2)$. A similar example was employed in \cite{yao2014new}. With simple calculations, it is easy to see that the conditional mean function is 
$f_{\mathsmaller{\sf ME}}=2\sin(\pi x)$ and the conditional mode function is approximately 
$f_{\mathsmaller{\sf MO}}=2\sin(\pi x)+1+2x$. In our experiment, $600$ observations are drawn from the above data-generating model and the size of the test set is also set to $600$. The reconstructed curve is plotted at the test points in Fig.\,\ref{toy_illustration_sigma}, in which the conditional mean function $f_{\mathsmaller{\sf ME}}$ and the conditional mode function $f_{\mathsmaller{\sf MO}}$ are also plotted for comparisons. In our experiment, we choose the three tuning parameters, i.e., the bandwidth parameter $h$ of the Gaussian kernel, the regularization parameter $\lambda$, and the scale parameter $\sigma$ in the loss function, by using Concatenated Cross Valudation described above.

From Fig.\,\ref{toy_illustration_sigma}, it is easy to see that the proposed modal regression estimator $f_{\mathbf{z},\sigma}$ can learn the conditional mode function $f_{\mathsmaller{\sf MO}}$ well instead of learning the conditional mean function $f_{\mathsmaller{\sf ME}}$. It is interesting to point out that the obtained empirical target function $f_{\mathbf{z},\sigma}$ can also learn the conditional mean function with a large $\sigma$ value as explained in Section \ref{sec::correntropy}.

\subsection{Application to  Speed-Flow Data}

We now apply the proposed modal regression estimator to speed-flow data. Speed-flow data are intensively discussed in transportation science, which are usually visualized  in terms of speed-flow diagrams. In this subsection, we apply the proposed modal regression approach to the analysis of the speed-flow data collected in \cite{petty1996freeway}, the speed-flow diagrams of which are presented in Figs.\,\ref{real_illustration_sigma1} and \ref{real_illustration_sigma2}. In the speed-flow diagrams, the $x$-axis is traffic flow that is measured in vehicles per lane per hour while the $y$-axis is speed measured in miles per hour. The speed-flow data analyzed here contain two data sets collected in 1993 on two individual lanes (lane 2 and lane 3) of the 4-lane Californian freeway I-880. The data were collected by loop detectors, and the time units are 30 seconds per observation, see \cite{einbeck2006modelling} for more background details. This speed-flow data  contains 1318 observations and are publicly available in the R-package {\sf{hdrcde}}. From the speed-flow diagrams, it can be observed that the mean regression function may not be able to characterize the functional relation between speed and traffic flow. This is also observed in many related studies that analyze the speed-flow data, see e.g., \cite{einbeck2006modelling}. This is because the less dense cloud of data points at the bottom of the two figures, which corresponds to situations where speed is dismissed, may be interpreted as abnormal observations when pursuing such a functional relation.

 \begin{figure}  
 	\tikzset{trim left=2.0 cm}
 	\setlength\figureheight{4cm}
 	\setlength\figurewidth{2cm}
 	\begin{minipage}[b]{0.2\textwidth}
 		\input{speedflow.tikz}
 	\end{minipage} 
 	\caption{The blue curve represents the conditional mode function estimator $f_{\mathbf{z},\sigma}$ for 1318 observations of lane 2 while the black curve gives the conditional mean function estimator by kernel ridge regression.}\label{real_illustration_sigma1}
 \end{figure}

 \begin{figure} 
	\tikzset{trim left=2.0 cm}
	\setlength\figureheight{4cm}
	\setlength\figurewidth{2cm}
	\begin{minipage}[b]{0.2\textwidth}
		\input{speedflow2.tikz}
	\end{minipage} 
	\caption{The blue curve represents the conditional mode function estimator $f_{\mathbf{z},\sigma}$ for 1318 observations of lane 3 while the black curve gives the conditional mean function estimator by kernel ridge regression.}\label{real_illustration_sigma2}
\end{figure} 

In our experiments, we apply the proposed modal regression approach to pursuing the functional relation. By following the same setup as in our above experiments on artificial data, we plot the learned modal regression estimator as well as the mean regression estimator resulting from kernel ridge regression. From the reported experimental results in Fig.\,\ref{real_illustration_sigma1} and Fig.\,\ref{real_illustration_sigma2}, it can be seen that modal regression estimator is less sensitive to abnormal observations and serves better in trend estimation when analyzing speed-flow data. It would be interesting to explore more real-world applications of the modal regression estimator learned through the proposed ERM approach, which will be the future work of our study in this respect.

\section{Conclusions}\label{sec::conclusion}
As one of the important regression protocols, modal regression has not been much studied yet in the statistical learning literature. In this study, we investigated the modal regression problem from a statistical learning viewpoint. By assuming the existence and the uniqueness of the global mode of the conditional distribution in regression, we reformulated the modal regression problem into the classical empirical risk minimization framework. In particular, such a reformulation renders the associated modal regression approach dimension-independent. A learning theory framework for analyzing and assessing the proposed modal regression estimator was also developed. Based on the proposed statistical learning treatment on modal regression, we gained some insights into the regression problem. These insights include: first, modal regression problem can be tackled via empirical risk minimization and can be also interpreted from a kernel density estimation point of view; second, learning for modal regression is generalization consistent and modal regression calibrated in the sense defined in our study; third, function estimation consistency and convergence in the sense of the $L_{\rho_\mathcal{X}}^2$-distance can be derived in modal regression. These findings in return unveil the working mechanism of MCCR when its scale parameter tends to zero as in this case, it corresponds to a modal regression problem.

\acks{The authors would like to thank the Action Editor and the reviewers for their constructive suggestions and comments that improved the quality of this paper. The research leading to these results has received funding from the European Research Council under the European Union's Seventh Framework Programme (FP7/2007-2013) / ERC AdG A-DATADRIVE-B (290923) and ERC AdG E-DUALITY (787960) under the European Union's Horizon 2020 research and innovation programme. This paper reflects only the authors' views, the Union is not liable for any use that may be made of the contained information. Research Council KUL: GOA/10/09 MaNet, CoE PFV/10/002 (OPTEC), BIL12/11T; PhD/Postdoc grants. Flemish Government:  FWO: projects: G.0377.12 (Structured systems), G.088114N (Tensor based data similarity); PhD/Postdoc grants. IWT: projects: SBO POM (100031); PhD/Postdoc grants. iMinds Medical Information Technologies SBO 2014. Belgian Federal Science Policy Office: IUAP P7/19 (DYSCO, Dynamical systems, control and optimization, 2012-2017). Yunlong Feng also gratefully acknowledges the support of Simons Foundation Collaboration Grant \#572064 and the Ralph E. Powe Junior Faculty Enhancement Award by Oak Ridge Associated Universities. The research of Jun Fan was supported in part by the Hong Kong RGC Early Career Schemes 22303518, and the NSF grant of China (No. 11801478). The corresponding author is Jun Fan.}

\bibliography{FENGBib}  
 
\end{document}